\begin{document}

\title{Snuffy: Efficient Whole Slide Image Classifier} 

\titlerunning{Snuffy}

\author{Hossein Jafarinia\inst{1}\orcidlink{0009-0003-4172-8372} \and
Alireza Alipanah\inst{1}\orcidlink{0009-0000-1292-9296} \and 
Danial Hamdi\inst{2}\orcidlink{0009-0005-1419-3338} \and
Saeed Razavi\inst{1}\orcidlink{0009-0009-0760-0758} \and 
Nahal Mirzaie\inst{1}\orcidlink{0009-0003-6954-7151} \and 
Mohammad Hossein Rohban\inst{1}\orcidlink{0000-0001-6589-850X} \thanks{Corresponding author.}}

\authorrunning{H. Jafarinia et al.}

\institute{Sharif University of Technology, Tehran, Iran \\
\email{\{jafarinia, alireza.alipanah46, saeed.razavi, nahal.mirzaie, rohban\}@sharif.edu}\\
\url{https://www.sharif.edu}
\and Amirkabir University of Technology (Tehran Polytechnic), Tehran, Iran \\
\email{danial.hamdi@outlook.com}\\
\url{https://aut.ac.ir}
}

\maketitle

\begin{abstract}
\small
Whole Slide Image (WSI) classification with multiple instance learning (MIL) in digital pathology faces significant computational challenges. Current methods mostly rely on extensive self-supervised learning (SSL) for satisfactory performance, requiring long training periods and considerable computational resources. At the same time, no pre-training affects performance due to domain shifts from natural images to WSIs.
We introduce \textbf{\textit{Snuffy}} architecture, a novel MIL-pooling method based on sparse transformers that mitigates performance loss with limited pre-training and enables continual few-shot pre-training as a competitive option. Our sparsity pattern is tailored for pathology and is theoretically proven to be a universal approximator with the tightest probabilistic sharp bound on the number of layers for sparse transformers, to date. We demonstrate Snuffy's effectiveness on CAMELYON16 and TCGA Lung cancer datasets, achieving superior WSI and patch-level accuracies. The code is available on \url{https://github.com/jafarinia/snuffy}.
\keywords{\small Whole Slide Image (WSI) \and Self-Supervised Learning (SSL) \and Sparse Transformer \and Multiple Instance Learning (MIL)}

\noindent
\begin{minipage}{\linewidth}
\captionsetup{type=figure,hypcap=false} 
\centering
  \includegraphics[width=0.93\textwidth]{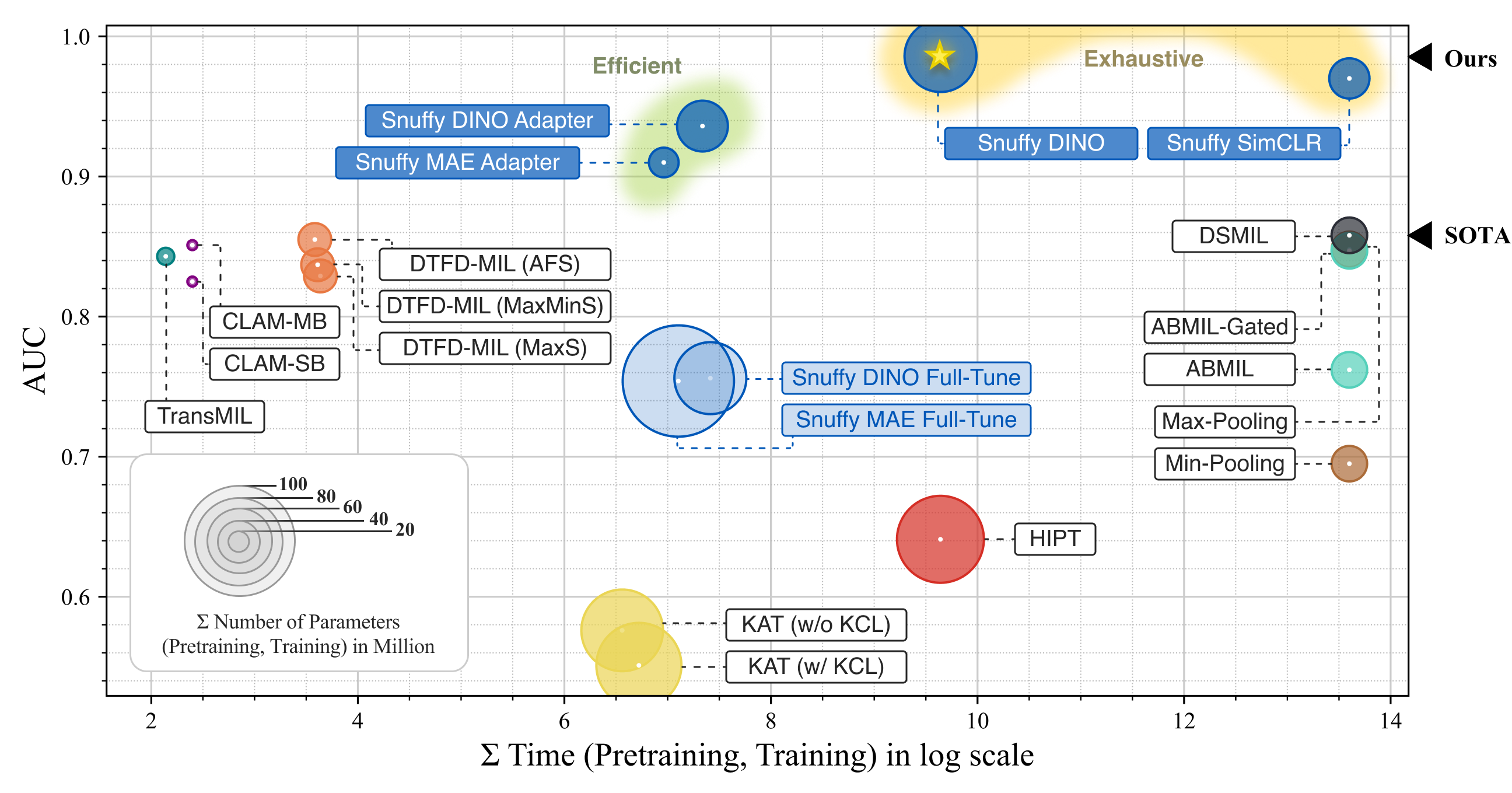}
  \captionof{figure}{Performance (AUC) vs. efficiency (size and time) trade off on CAMELYON16.}\label{fig:rebut}
\end{minipage}

\end{abstract}


\section{Introduction}
\label{sec:intro}

The emergence of whole slide images (WSIs) has presented significant opportunities to leverage machine learning techniques for essential tasks of cancer diagnosis and prognosis \cite{cellcommunitydetect, multiclasstexture, biomarkeverything}. Nevertheless, integrating contemporary deep learning advancements into these areas faces notable challenges. Primarily, the sheer size of WSIs, with usual dimensions of approximately 150,000 \(\times\) 150,000 pixels, renders them unmanageable for processing and training with existing deep learning frameworks on current hardware \cite{pathtoclinic}.

A common strategy to address this challenge is to divide WSIs into smaller patches followed by the application of Multiple Instance Learning (MIL) \cite{pathtoclinic,dnnforcomputationalhistology,notsosupervised,dependenciesindeepmil,deepweaksurvey,milinclasssegclus}. MIL, a variant of weakly supervised learning, considers instances as elements of sets involving an embedding phase and a pooling operation (MIL-pooling). The embedding phase often employs a pre-trained vision backbone, frequently with a self-supervised learning technique applied on the patches, transforming these patches into embeddings. Subsequently, MIL-pooling aggregates these embeddings, deriving scores at both the patch and WSI levels \cite{abmil}.

Recent advancements in WSI classification have achieved significant outcomes but face challenges like data-hungriness, high computational and memory requirements. These issues hinder the deployment and development of deep learning technologies in clinical and research settings. For example, RNN-MIL \cite{milrnn} and HIPT \cite{hipt} require tens of terabytes of data, while DSMIL \cite{dsmil} and DGMIL \cite{dgmil} require several months for pre-training phases. DTFD-MIL \cite{dtfd} uses an efficient MIL-pooling strategy but demands over 100 gigabytes of system memory for training, which is only feasible in some settings. Conversely, the absence of pre-training or insufficient pre-training degrades performance because of the domain shift from natural image datasets such as ImageNet-1K to WSIs (CLAM \cite{clam} and KAT's \cite{kat} low AUC as shown in Fig. \ref{fig:rebut}).

This work presents an approach that significantly reduces the computational demands required for training the embeddings by orders of magnitude. Then, empower its expressivity in a novel MIL-pooling to compensate for performance loss due to limited pre-training. Snuffy makes continual few-shot pre-training possible and a competitive option in this field by balancing efficiency and performance.

Our framework comprises two key components. First, we propose using \textbf{self-supervised continual pre-training} with Parameter Efficient Fine Tuning (PEFT) in the pathology domain, specifically utilizing Adaptformer\cite{adaptformer} due to its effective and straightforward design. While PEFT has been applied in various fields, its use in pathology imaging is novel. Our results indicate that transitioning from natural images to histopathology is feasible, allowing us to leverage PEFT methods effectively.\\
Second, inspired by the complex biology of cancer and the importance of the tissue microenvironment in cancer detection, we introduce the \textbf{Snuffy MIL-pooling architecture}, which features a new sparsity pattern for sparse transformers. We demonstrate that the Snuffy sparsity pattern acts as a universal approximator, with the number of layers constrained to a linear relationship with the number of patches, denoted as $\mathcal{O}(n)$. This finding represents the tightest probabilistic bound on the number of layers for sparse transformers to date.

We introduce two families within our framework: Efficient Snuffy and Exhaustive Snuffy. The Efficient Snuffy family is trained initially on a natural image dataset and then continues training with PEFT on WSIs. In contrast, the Exhaustive Snuffy family is trained from scratch on WSIs. Both families utilize the Snuffy MIL-pooling architecture. Although the performance of Efficient Snuffy may be slightly inferior to Exhaustive Snuffy, both methods significantly outperform existing benchmarks in Region-of-Interest (ROI) detection and WSI classification, setting a new state-of-the-art (SOTA).

In summary, our main contributions are as follows:
\begin{itemize}
\item Using continual SSL pre-training from ImageNet-1K pre-trained models to pathology datasets employing Adapters, substantially reducing the computational time for pre-training by an order of magnitude.

\item Introduction of a novel biologically driven sparsity pattern with a new probabilistic sharp bound on the number of layers to guarantee its universal approximation.

\item Achieving significant improvements in WSI classification metrics for both the exhaustive and efficient families, and reaching new state-of-the-art scores in WSI classification (AUC of 0.987) and ROI detection (FROC of 0.675) by the exhaustive family.

\item Validation of our method on widely recognized datasets, including CAMELYON16 and TCGA Lung Cancer WSI datasets, as well as on three classical multiple instance learning datasets: Musk1, Musk2, and Elephant, demonstrating consistent and superior performance across these varied datasets and establishing our MIL-pooling architecture as a general MIL-pooling architecture.

\end{itemize}

\section{Related Work}
\label{sec:related_work}

\subsubsection{Parameter-Efficient Fine-Tuning for Vision}
\label{peft}

Parameter-Efficient Fine-Tuning (PEFT), initially successful in NLP \cite{adapter, lora}, especially with Transformers \cite{adapter, madx, lora, unified_nlp_peft}, has recently been applied to Vision Transformers (ViTs) for supervised tasks \cite{adaptformer, medsam_adapter, clipadapter}. Techniques like Adapters \cite{adapter} and LoRA \cite{lora} help mitigate overfitting during fine-tuning. The high computational demands for self-supervised pre-training, along with SSL pre-training benefits on domain-specific datasets \cite{dsmil}, highlight PEFT's potential in SSL contexts. Recent studies \cite{pecop, fffff} have proposed continual pre-training from general datasets like ImageNet-1K to domain-specific ones. Our study is the first to apply this approach specifically from ImageNet-1K to pathology datasets, advancing the field.

\subsubsection{MIL for WSI Classification}
\label{mil_for_wsi}

The MIL-pooling operator must be permutation-invariant \cite{abmil}. Traditional methods like Mean-pooling and Max-pooling have had some success, but parameterized strategies are more effective \cite{abmil, dsmil}. Recent SOTA MIL-pooling techniques are mainly attention-based, transformer-based, and distribution-based.

\textbf{Attention-based Methods:} ABMIL \cite{abmil} uses a trainable weighted average for each patch, incorporating nonlinear functions of tanh(.) and sigm(.) and learnable parameters. DSMIL \cite{dsmil} employs a unique self-attention framework focusing on one patch's interaction with the rest, derived from Max-pooling.

\textbf{Transformer-based Approaches:} TransMIL \cite{transmil} uses a ViT variant with the Nyström method \cite{nystromformer} for self-attention approximation and introduces the Pyramid Position Encoding Generator (PPEG) for positional information. HIPT \cite{hipt} uses a three-tier hierarchical pyramid with DINO \cite{dino} in the initial stages and a compact ViT in the final stage for WSI-level tasks. KAT \cite{kat} introduces kernels reflecting spatial information across various scales for cross-attention with patch embeddings, they described it as a linear memory complexity Transformer.

\textbf{Distribution-based Methodologies:} They often use a clustering layer to improve embeddings for MIL-Pooling. CLAM \cite{clam} enhances embeddings by incorporating clustering atop its attention mechanism, while DGMIL \cite{dgmil} uses patch embedding distribution to create pseudo-labels for patches. This involves training a linear projection head and a linear layer on these pseudo-labels, utilizing the projection head for better embeddings with Mean-pooling for WSI classification. For identifying ROIs, it uses distribution scores from testing patches on these refined embeddings for patch labeling.

While these methods are effective for classification, their ROI detection performance is lacking or absent \cite{transmil, hipt, dtfd, kat}. Our research aims to improve both WSI classification and localization accuracy while demonstrating that our architecture achieves the universal approximation of self-attention mechanisms with a sharp concentration rate.

\section{Background}
\label{background}

\subsection{Notation}
For any positive integer $a$, we represent the set as $[a] = \{1,2,...,a\}$. If we have a matrix $A \in \mathbb{R}^{d \times n}$, we refer to its $j$-th column as $A_j$, and $A_S$ denotes the submatrix comprising columns of $A$ indexed by $\subseteq [n]$. The \(softmax\) operator $\sigma_S[\cdot]$ processes each column of a matrix, resulting in a column stochastic matrix.

In terms of asymptotic behavior, $f = \mathcal{O}(g)$ implies that there exists a constant $C > 0$ such that $f(n) \leq Cg(n)$ for all but finitely many $n$. Conversely, $f = \Omega(g)$ signifies that $g = \mathcal{O}(f)$. We use $\tilde X$ to conceal poly-logarithmic factors of $X$, like $\tilde\Omega(n)$.

\subsection{MIL Formulation}
\label{mil_formulation}

In a binary classification problem, the dataset \(D = \{(X_1, Y_1), ..., (X_n, Y_n)\}\) consists of bags \(X\), where each bag \(X = \{x_1, ..., x_k\}\) contains instances \(x\), and \(Y_i \in \{0, 1\}\) represents the bag's label. The individual instance labels \(\{y_{1}, ..., y_{k}\}\) with \(y \in \{0, 1\}\), are unknown during training. This is modeled as:

\begin{equation}
    Y =
    \begin{cases}
        0, & \text{iff } \sum_{i} y_i = 0\\
        1, & \text{otherwise}.
    \end{cases}
\end{equation}

or equivalently:

\begin{equation}
    Y = \max_{i} \{ y_i \}.
\end{equation}

To address the complexities in learning, to navigate this, it is proposed to train the MIL model by optimizing the log-likelihood function:

\begin{equation}
P(Y | X) = \theta(X)^Y (1 - \theta(X))^{1-Y},
\end{equation}

where \(\theta(X) \in [0, 1]\) is the probability of \(Y = 1\) given \(X\).

Given that MIL assumes no ordering or dependency of instances, \(\theta(X)\) must be a permutation-invariant (symmetric) function. This is achieved through the Fundamental Theorem of Symmetric Functions, with monomials \cite{deepsets} and a similar Theorem by \cite{pointnet} leading to:

\begin{equation}
    \theta(X) = g(\pi(f(x_1), ... , f(x_k))),
\end{equation}

where \(f\) and \(g\) are continuous transformations, and \(\pi\) is a permutation-invariant function (MIL-pooling). There are two approaches, the instance-level approach where \(f\) is an instance classifier and \(g\) is identity function, and the embedding-level approach, where \(f\) is a feature extractor and \(g\) maps its input to a bag classification score. The embedding-based approach is preferred due to its superior performance \cite{abmil}.

In Deep MIL, \(f\) typically uses pre-trained vision backbones to extract features from bag instances \cite{clam, dsmil, transmil, dgmil, dtfd, kat}. The aggregation function \(\pi\) ranges from non-parametric methods like max-pooling to parametric ones using attention mechanisms, as detailed in Section \ref{mil_for_wsi}. Finally, \(g\) is often implemented as a linear layer with \(\sigma\) to project aggregated instance representations into a bag score.

For multiclass classification, \(g\)’s output dimension is adjusted to the number of classes, and the \(softmax\) function is used instead of \(\sigma\) to distribute probabilities across classes.

\subsection{Sparse Transformers}
\label{unif_framework}

In the full-attention mechanism, each attention head considers interactions between every patch in an image. This requires significant memory and computational resources, with complexity scaling quadratically with the number of patches. However, in a sparse attention mechanism, each attention head only attends to a particular subset of patches instead of the entire set of patches. \\
Drawing upon the formalism presented in \cite{onconc}, the $i$th sparse attention head output for a patch $k$ in the layer $l$ is articulated as follows:
\begin{equation}\label{sparshead}
    SHead^{i,l}(X)_k = W_V^i X_{\mathcal{A}_k^l} \cdot \sigma \left(\left(W_K^i X_{\mathcal{A}_k^l}\right)^T W_Q^i X_k\right)
\end{equation}
When calculating the attention scores, the query vector of the $i$th head, $W_Q^iX_k$ of the $k$th patch interacts exclusively with the key vectors $W_K^iX_{\mathcal{A}_k^l}$ from patches belonging to its specific subset, ${\mathcal{A}_k^l}$. This means that attention is computed only between the $k$th patch and the patches in its assigned subset. 
Consequently, the output of each attention head for the patch $k$, $SHead^{i,l}(X)_k$ is a result of combining columns from the patch representations $W_V^i X_{\mathcal{A}^l_k}$ within its assigned subset, rather than considering the entire sequence of patches \cite{onconc}.
The collection of these subsets ${\mathcal{A}_k^l}$, across all layers $l \in [L]$ and patches $k \in [n]$, is termed the {\it sparsity patterns}.

Sparse attention mechanisms significantly reduce the time and memory complexities of transformers. However, this efficiency comes at the expense of loss of accuracy in attention matrix predictions depending on how sparse the sparsity patterns are. 
Expanding beyond this limitation, prior studies have identified diverse sparse patterns with $\mathcal{O}(n)$ connections (compared to $\mathcal{O}(n^2)$ in full-attention mechanisms), effectively approximating any full attention matrices \cite{strans,startrans,bigbird,Longformer}. Notably, \cite{onconc} established adequate conditions to ensure that any collection of sparsity patterns adhering to these criteria, alongside a probability map such as softmax, can serve as a universal approximator for sequence-to-sequence functions (Theorem~\ref{three}).

\begin{theorem}\label{three}
  A sparse transformer with any set of sparsity patterns $\{\mathcal{A}_k^l\}$ satisfying these conditions:
  \begin{enumerate}
    \item $\forall k \in [n],  \forall l \in [L], \ k \in \{\mathcal{A}_k^l\}$\label{three1} 
    \item There is a permutation $\gamma$ such that 
    $\forall i \in [n-1], \ \gamma(i) \in \bigcup\limits_{l=1}^{p} \{\mathcal{A}_{\gamma(i+1)}^l\}$.\label{three2}
    \item Each patch attends to every other patch, directly or indirectly.\label{three3}
  \end{enumerate}
  coupled with a probability map generating a column stochastic matrix that closely approximating hardmax operator, is a universal approximator of sequence-to-sequence functions \cite{onconc}. 
\end{theorem}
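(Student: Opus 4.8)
The plan is to follow the now-standard three-stage recipe for transformer universal approximation (as in Yun et al., on which \cite{onconc} builds), adapted to the sparse setting. First I would reduce the problem: fix a target continuous sequence-to-sequence function $f$ on a compact domain together with a tolerance, and approximate $f$ in the $L^p$ sense by a piecewise-constant function $\bar f$ that is constant on each cell of a fine grid partition of the input cube into cubes of side $\delta$. It then suffices to approximate $\bar f$ by a sparse transformer. Second, I would construct a \emph{modified} sparse transformer --- one using the hardmax operator in place of softmax and allowing arbitrary piecewise-linear feed-forward layers --- that reproduces $\bar f$ exactly on the grid. Third, I would show that this modified sparse transformer is itself approximated by a genuine sparse transformer whose probability map is only close to hardmax (e.g.\ softmax with a large inverse temperature), and then chain the three approximation errors.

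The technical core is the second stage, and within it the construction of a \emph{contextual mapping} using only the sparse connections $\{\mathcal{A}_k^l\}$. After a block of feed-forward layers quantizes every token to the grid, I would build, layer by layer, a selective-shift operation: at each layer a token reads the (sparse) set of tokens it attends to and, when its own quantized value falls in a designated active range, adds a carefully scaled increment so that its updated value encodes both its old value and an aggregate (e.g.\ the maximum) of the values in its receptive field. The goal is that after $\mathcal{O}(n)$ such layers every token holds a distinct real number --- a context id determined by the whole input sequence --- so that distinct quantized input sequences never produce the same collection of ids. This is where the three hypotheses enter: condition~\ref{three1} (self-loops) guarantees a token never loses its own information while shifting; condition~\ref{three2} supplies a permutation $\gamma$ along which consecutive tokens are connected within the first $p$ layers, so that a running aggregate can be pushed sequentially along the chain $\gamma(1), \gamma(2), \dots, \gamma(n)$; and condition~\ref{three3} ensures the attention graph is connected, so that the fully aggregated value accumulated at the end of the chain can be broadcast back to every token within another bounded number of layers. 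Counting the layers for one sweep along $\gamma$ (at most $p$ layers per consecutive edge) plus the broadcast yields the $\mathcal{O}(n)$ depth.

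The final stage is comparatively routine: I would substitute the given near-hardmax probability map for hardmax, noting that over the finite space of quantized configurations the attention logits are separated by a fixed margin, so a sufficiently accurate map keeps every selective-shift increment correct on all grid configurations; the piecewise-linear feed-forward maps are likewise approximated by the network's feed-forward layers, and the three errors compose. I expect the main obstacle to be the bookkeeping in the sparse contextual-mapping construction --- specifically, verifying that a single pass along the Hamiltonian-type path $\gamma$, interleaved with the at-most-$p$ layers realizing each consecutive connection, genuinely produces an injective map on all quantized sequences while keeping every token value inside the ranges where the later shifts behave as intended. Guaranteeing that increments introduced at different stages do not collide is what forces the precise powers-of-$\delta$ scaling of the shifts, and this quantitative separation argument is the delicate point; the remainder is careful composition of the error bounds from the three stages.
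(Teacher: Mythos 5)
The paper does not prove this theorem at all---it imports it verbatim from \cite{onconc}---and your sketch is a faithful reconstruction of that reference's actual argument: the three-stage reduction (piecewise-constant target, modified hardmax transformer realizing a sparse selective-shift contextual mapping along the Hamiltonian-type path $\gamma$, then substitution of the near-hardmax probability map), with the three hypotheses playing exactly the roles you assign them. So your proposal is correct and takes essentially the same approach as the cited source.
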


Put simply, criterion 1 mandates that each patch attends to itself. This condition guarantees that even if contextual attention embeddings are not computed within the self-attention framework for a patch, the patch's original embedding remains intact in the output. Additionally, criteria 2 and 3 state that when all patterns in a set of sparsity patterns are combined and an adjacency matrix is created, the resulting graph $G$, possesses both a Hamiltonian path and strong connectivity. From now on, we refer to any sparsity patterns that meet the conditions outlined in Theorem~\ref{three} as \textit{universal approximator sparsity patterns.}

\begin{figure*}
  \centering 
  \begin{subfigure}[b]{0.75\textwidth} 
    \centering 
    \includegraphics[width=\textwidth]{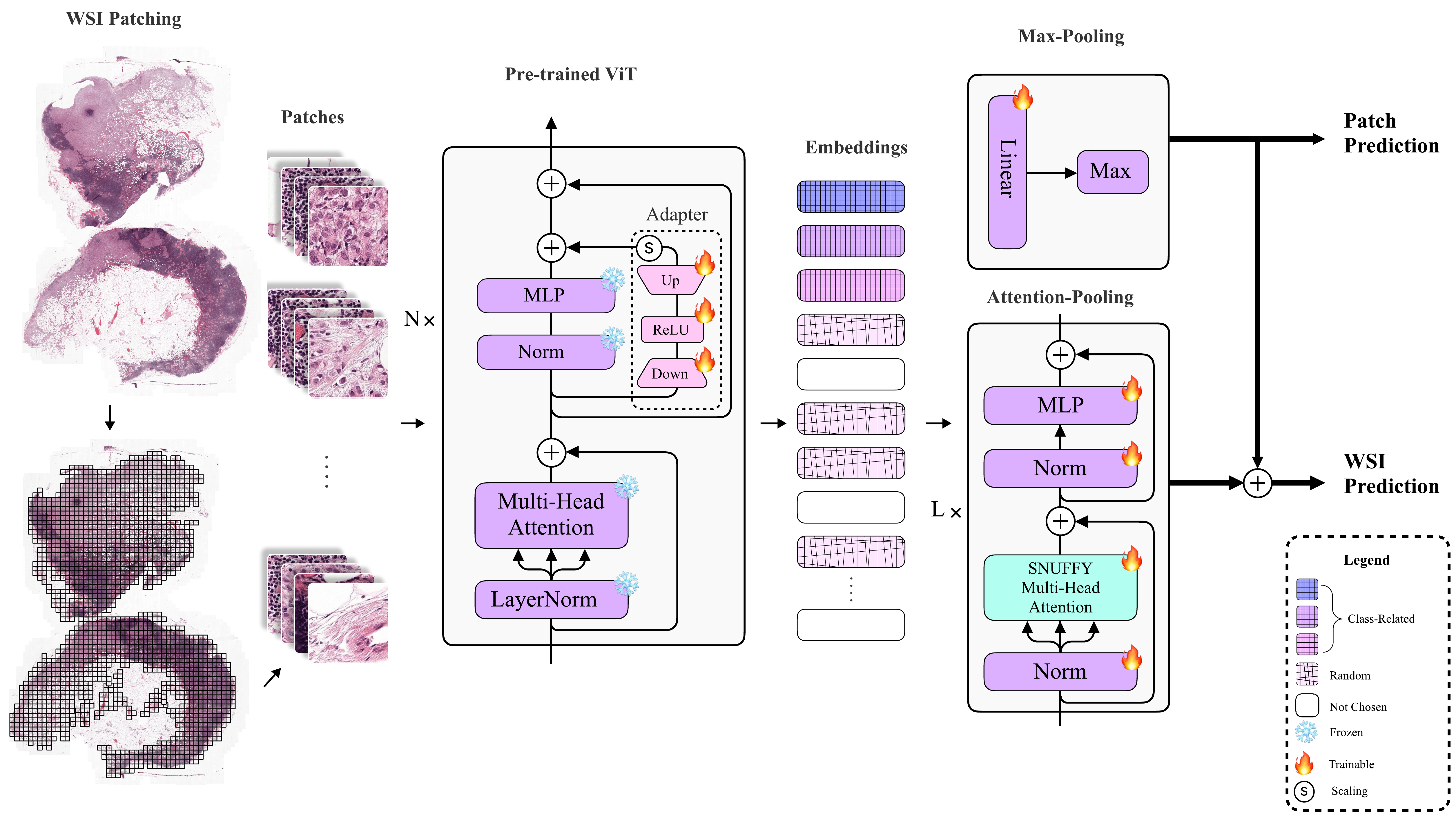}
    \caption{} \label{fig:smallarch}
  \end{subfigure}
  \hfill 
  \begin{subfigure}[b]{0.2\textwidth} 
    \centering 
    \raisebox{15mm}{
        \includegraphics[width=\textwidth]{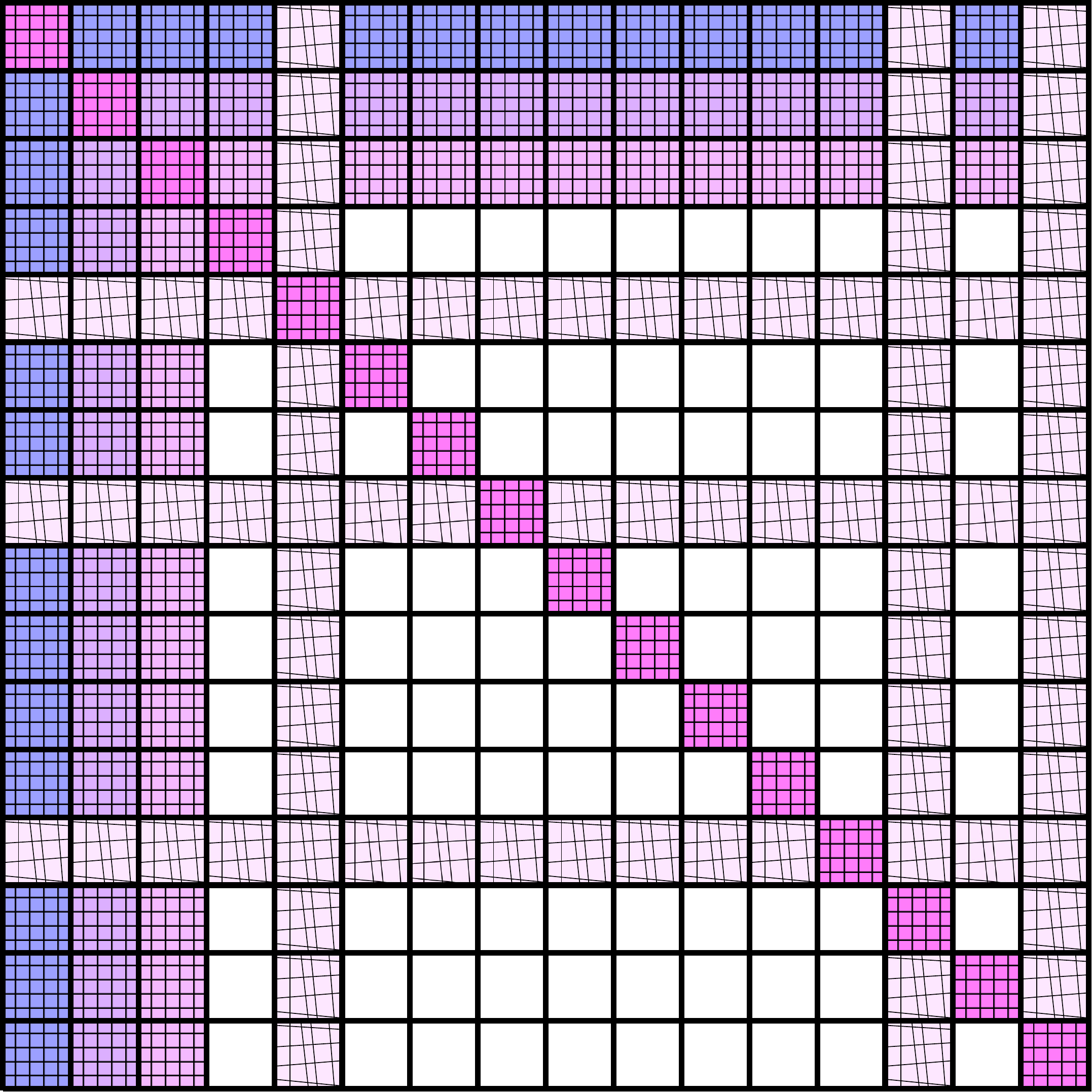}
    }
    \caption{} \label{fig:attentions}
  \end{subfigure}
  \caption{Overview of the proposed method (a) The WSIs are segmented into \(256 \times 256\) patches at 20X magnification, followed by embedding extraction via a pre-trained ViT \cite{vit}. Subsequently, these embeddings are inputted into the Snuffy for patch and WSI classification. (b) The connectivity matrix illustrates the Snuffy attention sparsity patterns, with Class-related Global Attentions, highlighted in darker colors either vertical or horizontal (the darker the more important), Diagonal Attentions depicted with pink, and Random Global Attentions shown in the lightest pink.}
\end{figure*}

\section{Method}
\label{method}

\subsection{Continual Few-shot Efficient Self-Supervised Pre-training}
\label{cont_few_eff_ssl}

To avoid extensive training on large domain-specific datasets, we propose using continual few-shot self-supervised pre-training with AdaptFormer \cite{adaptformer} on ViTs pre-trained on ImageNet-1K \cite{fffff}.

AdaptFormer \cite{adaptformer}, a PEFT variant, adds trainable weights to each layer while freezing the original network's weights. This leverages initial weights while adapting to domain-specific datasets, reducing overfitting \cite{pecop}.

Our experiments show that maintaining a self-supervised methodology during pre-training and adaptation yields a robust backbone with high-quality features, tuning fewer parameters, and reducing memory and computational demands.

For SSL method, we used Masked Autoencoder (MAE) \cite{mae}, which masks \(75\%\) of the input image and trains an encoder-decoder to reconstruct it, making the encoder a feature extractor for downstream tasks. Additionally, we used DINO \cite{dino}, known for its effectiveness in pathology \cite{hipt, benchmarking_wsi_ssl}. DINO \cite{dino} employs a student-teacher network setup where the student learns features from the teacher, who has a more global view of the image.

\subsection{Snuffy Architecture}
Informed by the prerequisites outlined for sparse transformers, pathologists' behavior, and drawing inspiration from DSMIL \cite{dsmil}, our Sparse Transformer architecture in MIL-pooling constitutes two main components: Max-pooling component and Attention-pooling (see Fig~\ref{fig:smallarch}). Further, we provide detailed descriptions of each component.

\subsubsection{Max-pooling Component}
The Max-pooling component serves a pivotal role within our framework. Initially, it functions as an effective instance-level model \cite{dtfd}.
Subsequently, it enhances the efficacy of the Attention-pooling module. On the other hand, the Attention-pooling module facilitates more effective optimization of the Max-pooling module. The empirical evidence supporting this assertion is presented in Table \ref{tablecamelyon}.

Given these observations, our methodology leverages the top \(\lambda_{top}\) patches from the Max-pooling model as the Class-related Global Attentions.

\subsubsection{Attention-pooling Component}
Present sparsity patterns are predominantly designed with a focus on NLP tasks \cite{o_n_enough, bigbird}, often exhibiting a prevalent structure known as the locality windowed pattern. However, in the domain of WSIs, pathologists frequently depend on the identification of non-related tissue within other tissues as a pivotal biomarker for cancer detection, even in instances where the tissue itself does not exhibit overt signs of malignancy \cite{skin_metastasis,paraneoplastic,dermato_asso,obs_jaundice}.\\

Inspired by the inherent characteristics of WSI analysis, we propose Snuffy sparsity patterns.

\begin{definition}\label{sparsitypattern}
\textbf{Snuffy} sparsity patterns for patch $k$ in layer $l$ are defined as:
  \begin{equation}
    A_{k}^{l} = \{k\} \cup 
    \begin{cases}
        [n] &  k \in \Lambda^l \\
        \Lambda^l  & \text{otherwise}
    \end{cases}
  \end{equation}
where $\Lambda^l := \Lambda_{top} \cup \Lambda^l_{r}$. Here, $\Lambda^l$ is set of patches selected in layer $l$ which consists of $\Lambda_{top}$ representing a set of top $\lambda_{top}$ patches from the max-pooling component, and $\Lambda_{r}^l$ denoting a set of random patches $\in [n]\backslash\Lambda_{top}$. 
\end{definition}

The Snuffy sparsity concept comprises three key elements: \textbf{Class-related Global Attentions} ($\Lambda_{top}$), \textbf{Random Global Attentions} ($\Lambda^l_{r}$), and \textbf{Diagonal Attentions} (${k}$). Class-related Global Attentions are crucial for the final classification task, and we leverage our max-pooling mechanism to identify them. We incorporate Random Global Attentions into our design to integrate insights from the tissue microenvironment, enhancing the capture of critical information. Diagonal Attentions ensure that even if contextual attention embeddings are not computed within the self-attention framework for a patch, the patch's original embedding remains preserved in the output.

After several iterations, the max-pooling component converges, and the scores for patches become fixed, allowing us to assume $\Lambda_{\text{top}}$ remains constant across layers (see illustration of Snuffy patterns in Fig~\ref{fig:attentions}).

\section{Universal Approximation of Snuffy}
\label{univ_appr}
In this section, we demonstrate that our sparse transformer serves as a universal approximator for sequence-to-sequence functions. By applying Theorem~\ref{three} and validating the Snuffy sparsity patterns defined in Definition~\ref{sparsitypattern}, we confirm that our transformer, utilizing softmax as the probability map, satisfies all conditions given in the theorem. Furthermore, we illustrate that our transformer does not necessitate $\tilde\Omega(n)$ layers, as previously suggested in studies \cite{bigbird}. Instead, it requires only $\mathcal{O}(\frac{n\log{2}}{\lambda_r})$ layers to ensure universal approximation with high probability, achieving the most stringent probabilistic limit of the layer count to our knowledge.\\

Snuffy sparsity patterns unquestionably satisfy the criteria \ref{three1} and \ref{three3} outlined in Theorem~\ref{three}. The first condition is fulfilled through the inclusion of $k \in \mathcal{A}_k^l$, as defined in Definition~\ref{sparsitypattern}. Moreover, the presence of at least one global attention patch within the patterns ensures connectivity among all patches, with a maximum path length of 2, thus meeting condition \ref{three3}. \\

To satisfy the criterion \ref{three2}, we must demonstrate the existence of a Hamiltonian path in the graph corresponding to the union of patterns in the Snuffy sparsity patterns. Initially, we introduce Proposition~\ref{graphtheory} from graph theory to facilitate the proof. We employ the proposition and demonstrate that covering half of the patches in all layers, overall satisfies its properties. This leads to the formation of a Hamiltonian path, thus fulfilling the desired proof.

\begin{proposition}\label{graphtheory}
    Every graph $G(E,V)$ with $E$ and $V$ as the set of edges and nodes, with $\rvert V\lvert \geq3$ and $\alpha(G) \leq \chi(G)$ has a Hamiltonian cycle. Where $\alpha(G)$ is the maximum independent set, and $\chi(G)$ is the chromatic number of $G$.
\end{proposition}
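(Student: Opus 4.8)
The plan is to argue by contradiction with the extremal longest-cycle method that underlies sufficient conditions of this Chv\'atal--Erd\H{o}s type. Assume $G$ has no Hamiltonian cycle and fix a longest cycle $C$ in $G$ (such a cycle exists once $G$ is $2$-connected, which is the situation the hypothesis is meant to guarantee); non-Hamiltonicity leaves at least one vertex $v\notin V(C)$. The objective is to build, from $v$ and $C$, an independent set whose size exceeds the bound appearing in the hypothesis, which will force the contradiction.

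The construction I would use is as follows. Suppose we can route $t$ internally disjoint paths from $v$ to $C$, landing at distinct vertices $x_1,\dots,x_t\in V(C)$, and let $x_i^{+}$ be the successor of $x_i$ under a fixed orientation of $C$. By the usual rerouting arguments resting on the maximality of $C$, an edge $v x_i^{+}$ would let me splice the $i$th path into a strictly longer cycle, and an edge $x_i^{+} x_j^{+}$ would let me combine two of the disjoint paths into a strictly longer cycle; both are therefore impossible. Hence $\{v\}\cup\{x_1^{+},\dots,x_t^{+}\}$ is an independent set, so $\alpha(G)\ge t+1$. To contradict the hypothesis $\alpha(G)\le\chi(G)$ I would then need at least $\chi(G)$ internally disjoint $v$--$C$ paths, i.e. $t\ge\chi(G)$.

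The hard part is securing that last inequality, and this is exactly where I expect the statement as literally worded to break. The maximum number of internally disjoint $v$--$C$ paths is a \emph{connectivity} quantity (by Menger it equals the local connectivity between $v$ and $C$), and the chromatic number gives no lower bound on it at all. Before trusting the claim I would test it on two copies of $K_4$ sharing a single vertex: there $\alpha=2$ and $\chi=4$, so $\alpha(G)\le\chi(G)$ holds, yet the shared vertex is a cut vertex and $G$ is not Hamiltonian (the path $P_3$ with $\alpha=\chi=2$ is an even smaller instance). This shows the chromatic hypothesis alone is insufficient and isolates the missing ingredient: the quantity that actually drives the successor argument is the vertex-connectivity $\kappa(G)$, so the inequality the proof needs is the genuine Chv\'atal--Erd\H{o}s condition $\alpha(G)\le\kappa(G)$, under which $t=\kappa(G)$ disjoint paths exist and $\alpha(G)\ge\kappa(G)+1$ closes the contradiction. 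I would prove the proposition under that connectivity hypothesis, and then observe that the discrepancy is harmless for the paper, since in Snuffy's union graph every global-attention patch is adjacent to all $n$ vertices, forcing $\kappa(G)$ to be large so that the condition holds.
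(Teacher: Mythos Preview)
Your analysis is correct, and in fact sharper than the paper's own treatment. The paper does not give a proof of this proposition at all: its ``proof'' in the supplementary material is a bare citation to Diestel's textbook (Chapter~10, Proposition~10.1.2). That reference is the Chv\'atal--Erd\H{o}s theorem, whose hypothesis is $\alpha(G)\le\kappa(G)$ with $\kappa$ the \emph{vertex connectivity}, not the chromatic number. So the proposition as written in the paper is a misstatement, and your counterexamples (two copies of $K_4$ glued at a vertex; the path $P_3$) genuinely refute it. Your longest-cycle/successor argument is the standard proof of the correct Chv\'atal--Erd\H{o}s statement and is fine.

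Your diagnosis of why the slip is harmless downstream is also right, though it is worth making the constants explicit. In the union graph $G_S$ the global patches $\Lambda$ form a clique adjacent to every other vertex, while $N'=[n]\setminus\Lambda$ is independent; hence $\kappa(G_S)=|\Lambda|$ (removing $\Lambda$ isolates the vertices of $N'$) and $\chi(G_S)=|\Lambda|+1$. Replacing $\chi$ by $\kappa$ in the hypothesis therefore shifts Lemma~\ref{lemma}'s threshold from $|\Lambda|\ge (n-1)/2$ to $|\Lambda|\ge n/2$, i.e.\ to $|\Lambda|\ge\lceil n/2\rceil$. This is the sharp threshold for $G_S$ to be Hamiltonian (in a Hamiltonian cycle no two $N'$-vertices can be consecutive, forcing $|N'|\le|\Lambda|$), so the corrected statement is tight and the off-by-one has no effect on Theorem~\ref{maintheorem}.
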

\begin{proof}
    see Supplementary Material~\ref{propositionproof}.
\end{proof}

\begin{lemma}\label{lemma}
    For $G_S$, the graph representing Snuffy sparsity patterns, we guarantee that there exists a Hamiltonian path if
    \begin{equation}
        \rvert \bigcup\limits_{l\in[L]} \Lambda^l\lvert \ \geq \ \frac{n-1}{2}
    \end{equation}
\end{lemma}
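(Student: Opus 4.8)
The plan is to apply Proposition~\ref{graphtheory} to $G_S$ and show that the hypothesis $\alpha(G_S) \le \chi(G_S)$ holds whenever $|\bigcup_{l} \Lambda^l| \ge (n-1)/2$, which by the proposition yields a Hamiltonian cycle and hence (deleting one edge) a Hamiltonian path, so condition~\ref{three2} of Theorem~\ref{three} is met. First I would set up the graph precisely: $G_S$ has vertex set $[n]$, and its edge set is the union over all layers $l$ of the adjacency relations induced by the Snuffy patterns $\mathcal{A}_k^l$. The crucial structural observation is that every patch $k$ is adjacent to every patch in $\Lambda^l$ in layer $l$ (either because $k \in \Lambda^l$ and then $\mathcal{A}_k^l = [n]$, or because $k \notin \Lambda^l$ and then $\Lambda^l \subseteq \mathcal{A}_k^l$), and symmetrically each member of $\Lambda^l$ is connected to all of $[n]$. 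So writing $\Lambda := \bigcup_{l\in[L]} \Lambda^l$, in $G_S$ every vertex of $\Lambda$ is adjacent to every other vertex of $[n]$ — i.e.\ $\Lambda$ is a set of universal (dominating) vertices.

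Next I would bound the two graph invariants. For the independence number: any independent set can contain at most one vertex from $\Lambda$ (since $\Lambda$-vertices are adjacent to everything), so $\alpha(G_S) \le 1 + |[n] \setminus \Lambda| = 1 + (n - |\Lambda|)$; if $|\Lambda| \ge (n-1)/2$ this gives $\alpha(G_S) \le 1 + n - \tfrac{n-1}{2} = \tfrac{n+3}{2}$, but more usefully the vertices outside $\Lambda$ together with at most one inside contribute the bound. For the chromatic number I would argue from the complementary direction: the $|\Lambda|$ universal vertices must all receive distinct colors and each must differ from the color of every other vertex, so $\chi(G_S) \ge |\Lambda| + (\text{something})$; concretely a proper coloring needs $|\Lambda|$ colors just for $\Lambda$, plus colors for the remaining $n - |\Lambda|$ vertices none of which may reuse a $\Lambda$-color, but those outside vertices may share colors among themselves depending on the non-global edges — in the worst case $\chi(G_S) \ge |\Lambda| + 1$ and the safe lower bound I want is $\chi(G_S) \ge |\Lambda|$. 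Combining, it suffices to check $1 + (n - |\Lambda|) \le |\Lambda|$, i.e.\ $|\Lambda| \ge (n+1)/2$; to land exactly on the stated threshold $(n-1)/2$ I would be a little more careful — note that among the $n - |\Lambda|$ non-global vertices, if there is at least one edge (from the diagonal/random structure) the independence bound tightens, or alternatively observe that when $|\Lambda| \ge (n-1)/2$ we have $n - |\Lambda| \le (n+1)/2$ and then account for the extra color available for the non-$\Lambda$ block so that $\chi(G_S) \ge |\Lambda| + 1 \ge n - |\Lambda| \ge \alpha(G_S) - 0$, closing the inequality. Then Proposition~\ref{graphtheory} applies (we also need $|V| = n \ge 3$, which holds in any nontrivial WSI setting), giving a Hamiltonian cycle, and removing any one edge gives the Hamiltonian path required by condition~\ref{three2}.

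The main obstacle I anticipate is pinning down the chromatic-number lower bound tightly enough to hit the exact constant $(n-1)/2$ rather than a weaker $(n+1)/2$ or $n/2$: $\chi(G_S)$ depends on the induced subgraph on $[n]\setminus\Lambda$, whose edges come only from the diagonal loops (which do not affect coloring) and possibly nothing else, so the naive bound is $\chi(G_S) = |\Lambda| + \chi(G_S[[n]\setminus\Lambda])$ with the second term possibly as small as $1$. I would reconcile this by being precise about the $+1$: the $n - |\Lambda|$ outside vertices form an independent set in the worst case but can all share a single color (distinct from all $\Lambda$-colors), so $\chi(G_S) = |\Lambda| + 1$ exactly in the extreme case, while simultaneously $\alpha(G_S) = (n - |\Lambda|) + 1$ in that same case; the condition $\chi \ge \alpha$ then reads $|\Lambda| + 1 \ge n - |\Lambda| + 1$, i.e.\ $|\Lambda| \ge n/2$. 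The discrepancy with $(n-1)/2$ is then just the ceiling/parity of $\lceil (n-1)/2 \rceil$ versus $\lceil n/2 \rceil$, which I would resolve by the observation that a Hamiltonian path (not cycle) needs one fewer edge, so it is enough to have $\alpha(G_S) \le \chi(G_S) + 1$, or equivalently to apply the proposition to $G_S$ plus a dummy vertex — whichever bookkeeping route gives the clean bound $(n-1)/2$. Modulo that constant-chasing, the argument is short: the real content is the single observation that the union of Snuffy patterns makes $\Lambda$ a universal vertex set, after which everything reduces to the Chvátal–Erdős-type criterion of Proposition~\ref{graphtheory}.
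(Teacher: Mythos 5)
Your overall strategy is exactly the paper's: observe that $\Lambda := \bigcup_{l\in[L]}\Lambda^l$ is a set of universal vertices in $G_S$ (a clique dominating everything), compute $\alpha(G_S)$ and $\chi(G_S)$, and invoke Proposition~\ref{graphtheory}. The one genuine flaw is your bound on the independence number. You write $\alpha(G_S) \le 1 + |[n]\setminus\Lambda|$ on the grounds that an independent set can contain at most one vertex of $\Lambda$, and later you take $\alpha(G_S) = (n-|\Lambda|)+1$ in the extremal case. But a universal vertex is adjacent to \emph{every} other vertex, so an independent set containing some $v\in\Lambda$ must be the singleton $\{v\}$; you cannot add any outside vertex to it. Since the only neighbours of a patch $k\notin\Lambda$ are $k$ itself and members of $\Lambda$, the set $[n]\setminus\Lambda$ is independent, and hence $\alpha(G_S) = \max(1,\, n-|\Lambda|) = n-|\Lambda|$ (for $\Lambda \neq [n]$), with no extra $+1$. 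Your chromatic-number computation is fine: the clique $\Lambda$ forces $|\Lambda|$ distinct colours and the mutually non-adjacent outside vertices share one more, so $\chi(G_S) = |\Lambda|+1$. With the corrected $\alpha$, the Chv\'atal--Erd\H{o}s-type condition $\alpha(G_S)\le\chi(G_S)$ reads $n-|\Lambda| \le |\Lambda|+1$, which is exactly $|\Lambda|\ge\frac{n-1}{2}$. This is precisely the paper's argument, and the threshold falls out cleanly.

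Consequently, the entire final paragraph of your proposal --- the worry about landing on $n/2$ versus $(n-1)/2$, the dummy-vertex trick, and the ``Hamiltonian path needs one fewer edge so $\alpha\le\chi+1$ suffices'' claim --- is patching a problem that does not exist once $\alpha$ is computed correctly; moreover, that last claim is not something Proposition~\ref{graphtheory} licenses and would need its own justification if it were actually required. Fix the independence number and delete the reconciliation paragraph, and your proof coincides with the paper's.
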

\begin{proof}  
The maximum independent set of patches in $G_S$ is equal to $[n]\backslash\Lambda$, where $\Lambda :=  \bigcup_{l\in[L]} \Lambda^l$ is the set of all covered patches. Conversely, the minimum number of colors needed to color the vertices of $G_S$ is $\rvert\Lambda\lvert  + 1$. Therefore, to satisfy $\alpha(G_S) \leq \chi(G_S)$, we must demonstrate that $\rvert [n]\backslash\Lambda \lvert \geq \rvert \Lambda\lvert + 1 $, which implies $\rvert \Lambda\lvert \geq \frac{n-1}{2}$.
 (for more details see supplementary~\ref{lemmaproof})
\end{proof}

Considering Lemma~\ref{lemma}, we make the keen observation that we only need to ensure that after a finite number of layers, we have covered at least $\lfloor{\frac{n}{2}}\rfloor$ of the patches. This observation resembles a generalized version of the coupon collector problem, where the goal is to collect half of the coupons in a $\lambda_r$-uniform group setting. In this scenario, at each step, we can collect $\lambda_r$ number of cards simultaneously. This leads us to our main theorem:

\begin{theorem}\label{maintheorem}
    If $\lambda_r = \mathcal{O}(n)$, where n is number of patches, and $\lambda_r = \rvert \Lambda_r \lvert$, then the number of layers $L$ needed to prove that the Snuffy sparsity pattern (defined in ~\ref{sparsitypattern}) is a universal approximator sparsity pattern is concentrated around $\frac{n\log{2}}{\lambda_r}$. More precisely, we have:
    \begin{equation}
        \lim_{n \to \infty} \mathbb{P}( \rvert L - \frac{n\log{2}}{\lambda_r} \lvert > \frac{c\sqrt{n}}{\lambda_r}) \longrightarrow 1 - \Phi(c)
    \end{equation}
\end{theorem}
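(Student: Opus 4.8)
The plan is to combine Lemma~\ref{lemma} with a quantile central limit theorem for a generalized coupon--collector process. By Lemma~\ref{lemma} it suffices to control the smallest $L$ with $\bigl|\bigcup_{l\in[L]}\Lambda^l\bigr|\ge\lceil\tfrac{n-1}{2}\rceil$. Since $\Lambda_{top}$ is the same set in every layer and contributes only $\lambda_{top}=\mathcal{O}(1)$ patches, this is equivalent, up to an $\mathcal{O}(1)$ shift of the target, to asking how many layers are needed before $\bigcup_{l\in[L]}\Lambda_r^l$ covers half of a ground set of size $n-\lambda_{top}\sim n$. Modelling each layer as an independent uniform draw of $\lambda_r$ distinct patches, this is exactly the ``collect half the coupons, $\lambda_r$ at a time'' problem referred to in the text, and the theorem is a central limit statement for its completion time.

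First I would introduce, for each patch $j$, the first layer $\tau_j$ in which it is selected. Here $\mathbb{P}(\tau_j>\ell)=\bigl(1-\tfrac{\lambda_r}{n-\lambda_{top}}\bigr)^{\ell}$ for every $j$, so each $\tau_j$ is essentially Geometric with parameter $p=\lambda_r/n+o(p)$; the $\tau_j$ are exchangeable, and since the within-layer inclusion indicators are negatively associated and negative association is preserved under independent products and under non-decreasing functions of disjoint blocks, the family $(\tau_j)_j$ is negatively associated. The key identity is that $L$ equals the median order statistic of $\tau_1,\dots,\tau_n$; equivalently $\{L>\ell\}=\{S_\ell<\lceil\tfrac{n-1}{2}\rceil\}$, where $S_\ell:=\sum_{j}\mathbf 1[\tau_j\le\ell]$ counts the distinct patches covered after $\ell$ layers.

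Next I would evaluate $S_\ell$ at the scaling $\ell_t:=\tfrac{n\log 2}{\lambda_r}+\tfrac{t\sqrt n}{\lambda_r}$. Using $(1-p)^{\ell}=e^{\ell\log(1-p)}$ with $\log(1-p)=-p+\mathcal{O}(p^2)$ one gets $p\,\ell_t=\log 2+t/\sqrt n+o(1/\sqrt n)$, hence $\mathbb{E}[S_{\ell_t}]=n\bigl(1-(1-p)^{\ell_t}\bigr)=\tfrac n2+\tfrac t2\sqrt n+o(\sqrt n)$, while negative association bounds $\mathrm{Var}(S_{\ell_t})$ above by its value in the fully independent case, $n(1-p)^{\ell_t}\bigl(1-(1-p)^{\ell_t}\bigr)=\tfrac n4+o(n)$, with a matching lower bound from the same estimates. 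A central limit theorem for sums of bounded negatively associated triangular arrays (Newman's CLT, with the trivial Lindeberg check, or, alternatively, a coupling with the i.i.d.-inclusion model in which each patch enters a layer independently with probability $p$, controlling the $\mathcal{O}(1/n)$ discrepancy) then yields $\frac{S_{\ell_t}-n/2}{\sqrt n/2}\Rightarrow N(t,1)$. Therefore $\mathbb{P}(L>\ell_t)=\mathbb{P}(S_{\ell_t}<\tfrac n2)\to\Phi(-t)$, i.e.\ $\tfrac{\lambda_r}{\sqrt n}\bigl(L-\tfrac{n\log 2}{\lambda_r}\bigr)\Rightarrow N(0,1)$, from which the asserted limit follows on reading off the tail of the standard normal (two-sided $2(1-\Phi(c))$, or one-sided $1-\Phi(c)$, matching the form written in the statement). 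Equivalently one may rescale $Y_j:=p\,\tau_j\Rightarrow\mathrm{Exp}(1)$, whose median is $\log 2$ with density $e^{-\log 2}=\tfrac12$ there, and quote the classical sample-quantile CLT with asymptotic variance $\tfrac{(1/2)(1/2)}{n(1/2)^2}=\tfrac1n$ for $Y_{(\lceil n/2\rceil)}$, then undo the scaling.

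The main obstacle is the dependence: because exactly $\lambda_r$ patches are drawn per layer, the $\tau_j$ are not independent even within a single layer, so the textbook i.i.d.\ quantile CLT is not directly applicable. The real work is to show that negative association suffices, i.e.\ to verify the CLT for $\{\mathbf 1[\tau_j\le\ell_t]\}_{j\le n}$ with the correct $\Theta(n)$ variance (which in turn requires the Taylor control of $(1-p)^{\ell}$ to be uniform over the $t$-window), or, equivalently, to quantify the total-variation cost of the coupling to the i.i.d.-inclusion process. A secondary point worth flagging is the regime of $\lambda_r$: the clean centering $n\log 2/\lambda_r$ and width $\sqrt n/\lambda_r$ rely on $-\log(1-p)=p+\mathcal{O}(p^2)$ and on the geometric-to-exponential limit, both of which need $p=\lambda_r/n\to 0$; when $\lambda_r=\Theta(n)$ the centering should instead read $\log 2/\bigl(-\log(1-\lambda_r/n)\bigr)$, the lower-order corrections must be tracked, and the discreteness of the geometric can even obstruct a non-degenerate $\sqrt n$-scaled limit, so the hypothesis $\lambda_r=\mathcal{O}(n)$ should be read with this caveat.
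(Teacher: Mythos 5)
Your route is genuinely different from the paper's. The paper reduces to the same half-coverage condition via Lemma~\ref{lemma}, but then proves concentration of the stopping time by (i) quoting Baum's CLT for the \emph{singleton} coupon collector's waiting time $T_{n,n/2}$ and (ii) coupling the $\lambda_r$-uniform draw sequence to a singleton draw sequence (the blocks $a_i$, the partial sums $S_m$, and a concentration lemma for $S_m$), finishing with a triangle inequality. You instead work directly with the coverage count $S_\ell=\sum_j\mathbf 1[\tau_j\le\ell]$ and the duality $\{L>\ell\}=\{S_\ell<\lceil\tfrac{n-1}{2}\rceil\}$, proving a CLT for $S_{\ell_t}$ at the critical scaling. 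That duality is clean, avoids the coupling entirely, and correctly isolates the only real probabilistic content; your caveats about the regime of $\lambda_r$ (the centering and the $+1$ rounding terms genuinely break down unless $\lambda_r=o(\sqrt n)$, not merely $\mathcal{O}(n)$) are well taken and in fact also afflict the paper's own triangle-inequality step.

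There is, however, one concrete gap: the variance of $S_{\ell_t}$ is not $\tfrac n4+o(n)$, and the claimed "matching lower bound from the same estimates" is false. Negative association gives only the upper bound; the pairwise covariances, while individually $\mathcal{O}(1/n)$, number $n(n-1)$ and do \emph{not} wash out. Explicitly, $\mathbb{P}(\tau_j>\ell,\tau_k>\ell)=\bigl(\tfrac{(n-\lambda_r)(n-\lambda_r-1)}{n(n-1)}\bigr)^{\ell}$, and at $\ell=\ell_t$ the ratio to $\mathbb{P}(\tau_j>\ell)^2$ is $\exp\bigl(-\tfrac{\log 2}{n-\lambda_r}(1+o(1))\bigr)$, so each covariance is $\approx-\tfrac{\log 2}{4n}$ and their sum is $-\tfrac{n\log 2}{4}(1+o(1))$. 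Hence $\mathrm{Var}(S_{\ell_t})=\tfrac{n(1-\log 2)}{4}(1+o(1))$ (this is the classical occupancy variance $n(e^{-c}-(1+c)e^{-2c})$ at $c=\log 2$), and your CLT should read $\tfrac{\lambda_r}{\sqrt n}\bigl(L-\tfrac{n\log 2}{\lambda_r}\bigr)\Rightarrow N(0,\,1-\log 2)$, giving a tail $1-\Phi\bigl(c/\sqrt{1-\log 2}\,\bigr)$ rather than $1-\Phi(c)$. Note that $1-\log 2$ is exactly the constant appearing in Baum's variance $\sigma_n^2$ at $d=\tfrac12$, which the paper itself quotes, so the correct constant is visible from within the paper's own toolkit; your answer is only consistent with theirs up to this constant factor (a direction in which the concentration is \emph{tighter} than claimed, so the qualitative conclusion survives). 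The same defect infects your alternative quantile-CLT shortcut: the $\tau_j$ are not i.i.d., and the textbook asymptotic variance $\tfrac{(1/2)(1/2)}{n(1/2)^2}$ is precisely the quantity that the negative dependence deflates by the factor $1-\log 2$. To close the gap you must either compute the covariance sum as above and invoke a CLT for occupancy counts (or for exchangeable NA triangular arrays with $\Theta(n)$ total covariance), or quantify the coupling to the independent-inclusion model well enough to transfer not just the mean but the second moment.
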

\begin{proof}
    see Supplementary Material~\ref{maintheoremproof}
\end{proof}

\section{Experiments and Results}
\label{exp_res}

\subsection{Datasets}
\label{dataset}

For evaluation, we use the CAMELYON16 , TCGA Lung Cancer, and MIL Datasets \cite{camelyon16, tcga, muskmildatasets, animalmildatasets}. The CAMELYON16 dataset \cite{camelyon16} includes 270 training and 129 testing WSIs, categorized into tumor and normal classes. The TCGA Lung Cancer dataset \cite{tcga} consists of 1042 WSIs (530 LUAD and 512 LUSC). The MIL Datasets \cite{muskmildatasets, animalmildatasets} are specifically designed for MIL. CAMELYON16 is particularly challenging and referred to as a "needle-in-a-haystack" dataset \footnote{\url{https://github.com/mahmoodlab/HIPT/issues/41}}. For more details on these datasets, refer to the Supplementary Material \ref{supdatasets}.

\subsection{Experimental Setup}
\label{exp_setup}

For evaluation of the CAMELYON16 dataset \cite{camelyon16} we segmented WSIs into \(256 \times 256\) patches at \(20X\) magnification level, excluding patches predominantly featuring background elements and adhered to its official split for testing.

In the analysis of the TCGA Lung Cancer dataset \cite{tcga}, WSIs were similarly segmented into \(256 \times 256\) patches at a \(20X\) magnification, with background patches being discarded. The dataset was divided into training, validation, and test sets, constituting roughly \(60\%\), \(15\%\), and \(25\%\) of the total, respectively.

For the MIL datasets \cite{muskmildatasets, animalmildatasets}, we implemented a 10-fold cross-validation procedure.

This comprehensive and iterative evaluation approach empowers us to affirm the efficacy of our proposed framework alongside other SOTAs with substantial reliability.

We refer to Snuffy models as Snuffy + SSL pre-training method + Exhaustive for training from scratch, Adapter for fine-tuning with adapter, and Full-tune for fine-tuning all weights without adapter.

For more on the experimental setup and imeplementation details see Supplementary Materials \ref{exp_setup_supp}, \ref{imp_details} respectively.

\subsection{Evaluation Metrics}
\label{eval_metrics}

For WSI classification across the CAMELYON16 \cite{camelyon16}, TCGA Lung Cancer \cite{tcga}, and MIL datasets \cite{muskmildatasets, animalmildatasets}, we utilize Accuracy and Area Under the Receiver Operating Characteristic Curve (AUC) as standard evaluation metrics. Specifically, for the CAMELYON16 dataset \cite{camelyon16}, recognized for its complexity, we underscore the critical yet mostly overlooked aspect of model calibration evaluation in the high-stakes domain of WSI classification, which directly impacts patient care. In this context, we utilize the evaluation of Expected Calibration Error (ECE), the most widely adopted calibration measure. ECE represents the average discrepancy between the model's prediction confidence and actual performance accuracy. For more details check Supplementary Material \ref{calibration}.

For ROI detection, exclusively applicable to the CAMELYON16 dataset, we employ the Patch classification AUC and the Free Response ROC (FROC) as the benchmark metrics for evaluating ROI detection within WSI classification frameworks. The Patch classification Accuracy is omitted from our reporting due to the unbalanced portion of tumor regions in this dataset.

\subsection{Baselines}
\label{baseline}

For baseline we employ traditional approaches, such as Max-pooling and Mean-pooling, and current SOTAs including ABMIL, ABMIL-Gated \cite{abmil}, CLAM-SB, CLAM-MB \cite{clam}, non-local DSMIL \cite{dsmil}, TransMIL \cite{transmil}, DGMIL \cite{dgmil}, HIPT \cite{hipt}, DTFD-MIL (MaxS), DTFD-MIL (MaxMinS), DTFD-MIL (AFS) \cite{dtfd}, KAT (w/o KCL), KAT (w/ KCL) \cite{kat}. If a model does not provide a direct way of ROI detection, we do not report the corresponding metrics for it. Due to computational resource constraints, we could not directly evaluate DGMIL \cite{dgmil} across all datasets, nor could we assess HIPT \cite{hipt} on the TCGA Lung Cancer dataset \cite{tcga}, as these methods need highly resource-intensive vision backbone pre-training. For Slide and Patch metrics, We have referred to the performance metrics reported in their original publications for these methods.

\subsection{Results and Analysis}
\label{res_analysis}

As shown in Table \ref{tablecamelyon}, Snuffy achieves competitive performance in WSI classification and patch classification, using minimal resources. When exhaustive SSL pre-training is applied, Snuffy consistently outperforms other models. Its strong performance in the calibration metric ECE highlights its potential for clinical applications. Snuffy's superiority is further validated by the results in Table \ref{tabletcga}. Also, Adapter vs Full-tune shows fine-tuning with Adapter is better.

The performance difference between WSI and patch classification in Table \ref{tablecamelyon} for ABMIL \cite{abmil} and DSMIL \cite{dsmil} is due to their use of attention scores, proxies for actual labels. Mean-pooling shows high ROI detection results because it uses instance-level MIL, making it a strong instance classifier but a weak bag classifier. Mean-pooling's approach, a linear layer followed by a mean function, struggles with the unbalanced nature of CAMELYON16's tumor regions (less than 10\%). This imbalance causes normal patches to dominate the final decision, leading to a high false-negative rate in bag classification.

The observed underperformance of HIPT \cite{hipt} on the challenging CAMELYON16 dataset \cite{camelyon16} can be attributed to its dependency on two sequential dimension reduction steps via DINO pre-training \cite{dino}, which introduces substantial error accumulation detrimental to the classifier's final stage. This issue is combined with the model's reliance on limited input data for a data-hungry ViT.

In ROI detection, our model surpasses existing methods by a significant margin, establishing a new SOTA. The inferior performance of DGMIL \cite{dgmil} in this area is linked to its dependency on noisy labels, which are generated through clustering with a preset and potentially inaccurate number of clusters for embedding enhancement.

Furthermore, our findings across MIL datasets show the versatility of our MIL-pooling strategy, affirming its efficacy as a broadly applicable architecture within the MIL framework which can be seen in Table \ref{tablemildatasets}.

For qualitative ROI detection evaluation check Fig. \ref{fig:2}.

\begin{figure*}
  \centering 
  \hspace{5mm}\begin{subfigure}[b]{0.4\textwidth}
    \centering
    \includegraphics[width=\textwidth]{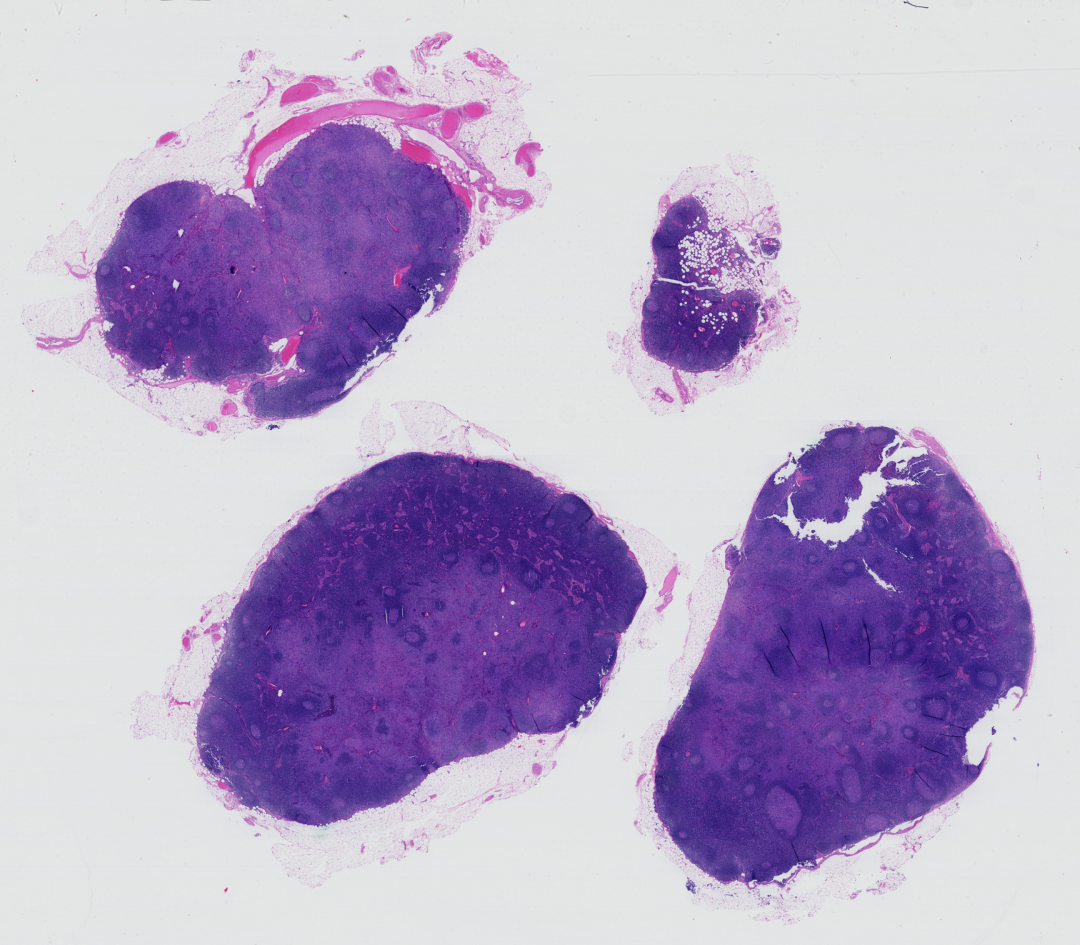}
    \caption{} \label{fig:2a}
  \end{subfigure}
  \begin{subfigure}[b]{0.4\textwidth}
    \centering
    \includegraphics[width=\textwidth]{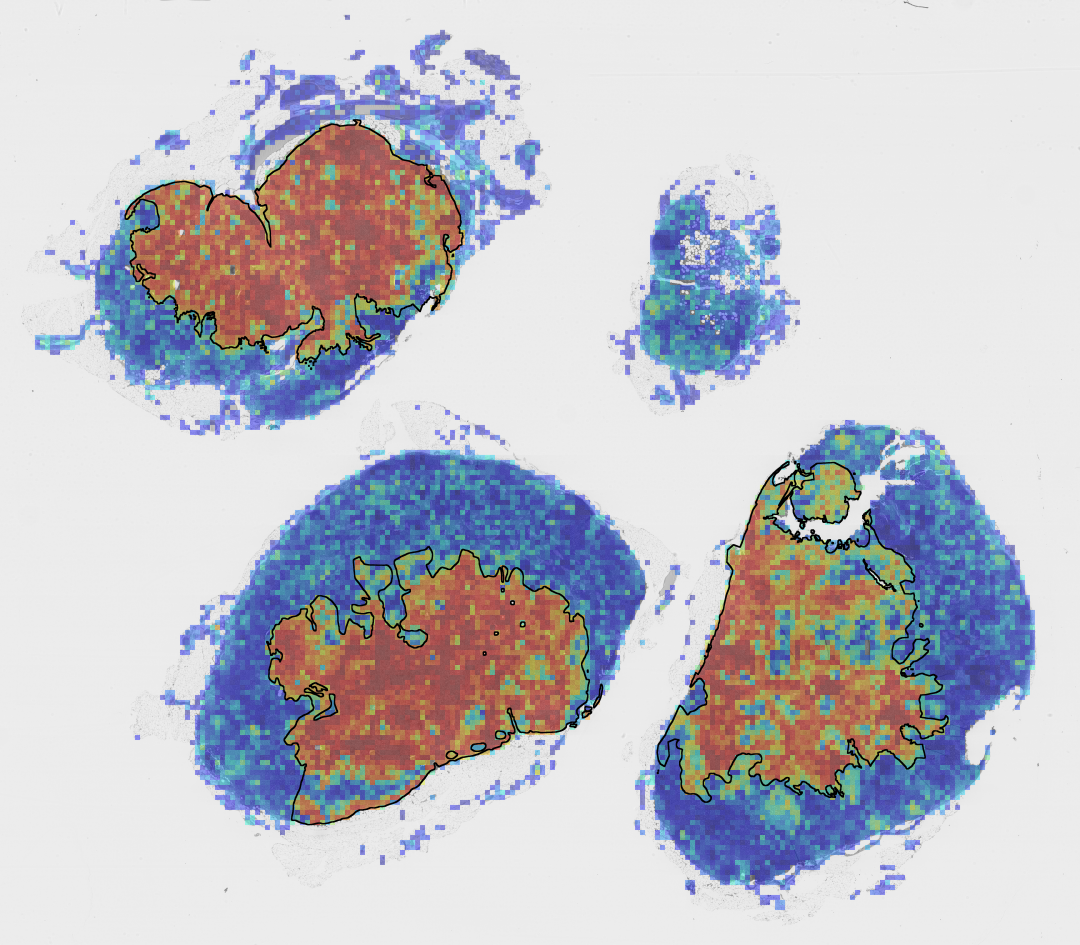}
    \caption{} \label{fig:2b}
  \end{subfigure}
  \begin{subfigure}[b]{0.1\textwidth} 
    \raisebox{3.9mm}{
      \includegraphics[width=0.321\textwidth,height=43.0mm]{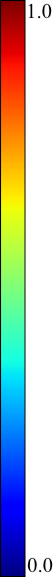}
      }
  \end{subfigure}
  \caption{Qualitative view of ROIs recognized by Snuffy through its Patch Classification. (a) An example WSI from the test set of the CAMELYON16 dataset \cite{camelyon16}. (b) ROIs are identified by Snuffy with black lines delineating the ground truth ROIs.}
  \label{fig:2}
\end{figure*}

\begin{table*}[t]
    \setlength{\tabcolsep}{5pt}
    \centering
    \resizebox{\textwidth}{!}{
    \begin{tabular}{l|ccc|cc|cc|cc}
    \hline
    \multicolumn{1}{c}{Method} &
    \multicolumn{3}{c}{Slide} &
    \multicolumn{2}{c}{Patch} &
    \multicolumn{2}{c}{System} &
    \multicolumn{2}{c}{Resource} \\
    \cline{2-10}
    \multicolumn{1}{c}{} &
    ACC & AUC & \multicolumn{1}{c}{ECE} & AUC & \multicolumn{1}{c}{FROC} & Mem & \multicolumn{1}{c}{GPU Mem} & \(\sum\)Time & \(\sum\)\#Params \\
    \multicolumn{1}{c}{} &  &  & \multicolumn{1}{c}{} & & \multicolumn{1}{c}{} & (GB) & \multicolumn{1}{c}{(GB)} & (Minute) & (Million) \\
    \hline
    Mean-pooling SimCLR Exhaustive & \( 0.614 \) & \( 0.695 \) & \( 0.070 \) & \( \mathbf{0.980} \) & \( 0.597 \) & \( 8.3 \) & \( 2.4\) & \( 806,447 \)& \( 11.56 \)
    \\
    Max-pooling SimCLR Exhaustive & \( 0.829 \) & \( 0.848 \) & \( 0.056 \) & \( 0.790 \) & \( 0.387 \) & \( 8.3 \) & \( 2.4\) & \( 806,447 \) & \( 11.56 \)
    \\
    ABMIL SimCLR Exhaustive \cite{abmil} & \( 0.815 \) & \( 0.762 \) & \( 0.184 \) & \( 0.303 \) & \( 0.182 \) & \( 8.3 \) & \( 1.62\) & \( 806,422 \) & \( 11.62 \)
    \\
    ABMIL-Gated SimCLR Exhaustive \cite{abmil} & \( 0.876 \) & \( 0.847 \) & \( 0.124 \) & \( 0.394 \) & \( 0.245 \) & \( 8.3 \) & \( 1.62\) & \( 806,423 \) & \( 11.69 \)
    \\
    DGMIL \cite{dgmil} & \( 0.801 \) & \( 0.836 \) & \( NA \) & \( 0.904 \) & \( 0.488 \) & \( 76\) & \( 11\) & \( 1,612,920 \) & \( 111.66 \)
    \\
    DSMIL \cite{dsmil} & \( 0.810 \) & \( 0.858 \) & \( 0.051 \) & \( 0.411 \) & \( 0.328 \) & \( 19.2 \) & \( 3.09 \) & \( 806,448 \) & \( 11.64 \)
    \\
    TransMIL \cite{transmil} & \( 0.815 \) & \( 0.843 \) & \( 0.125 \) & \( - \) & \( - \) & \( 1.51\) & \( 7.87\) & \( 8.5 \) & \( 2.7 \)
    \\
    CLAM-SB \cite{clam} & \( 0.820 \) & \( 0.825 \) & \( 0.086 \) & \( - \) & \( - \) & \( 1.16\) & \( 1.97\) & \( 11 \) & \( 0.92 \)
    \\
    CLAM-MB \cite{clam} & \( 0.821 \) & \( 0.851 \) & \( 0.110 \) & \( - \) & \( - \) & \( 1.15\) & \( 1.97\) & \( 11 \) & \( 0.92 \)
    \\
    HIPT \cite{hipt} & \( 0.634 \) & \( 0.641 \) & \( 0.122 \) & \( - \) & \( - \) & \( 0.92 \) & \( 2.40\) & \( 15,384 \) & \( 69.35 \)
    \\
    KAT (w/o KCL) \cite{kat} & \( 0.579 \) & \( 0.576 \) & \( 0.290 \) & \( - \) & \( - \) & \( 2.52\) & \( 16.43\) & \( 706 \) & \( 61.2 \)
    \\
    KAT (w/ KCL) \cite{kat} & \( 0.576 \) & \( 0.551 \) & \( 0.301 \) & \( - \) & \( - \) & \( 30\) & \( 15.22\) & \( 830 \) & \( 66.4 \)
    \\
    DTFD-MIL (MaxS) \cite{dtfd} & \( 0.821 \) & \( 0.829 \) & \( 0.107 \) & \( - \) & \( - \) & \( 29.73\) & \( 1.38 \) & \( 38 \) & \( 9.86 \)
    \\
    DTFD-MIL (MaxMinS) \cite{dtfd} & \( 0.818 \) & \( 0.837 \) & \( 0.101 \) & \( - \) & \( - \) & \( 29.73\) & \( 1.50\) & \( 37 \) & \( 9.86 \)
    \\
    DTFD-MIL (AFS) \cite{dtfd} & \( 0.832 \) & \( 0.855 \) & \( 0.083 \) & \( - \) & \( - \) & \( 29.72\) & \( 1.47\) & \( 36 \) & \( 9.86 \)
    \\\hline
    Snuffy SimCLR Exhaustive & \( \mathbf{0.952} \) & \( 0.970 \) & \( \mathbf{0.057} \) & \( \mathbf{0.980} \) & \( 0.622 \) & \( 8.3 \) & \( 4.8 \) & \( 806,496 \) & \( 14.87 \)
    \\
    Snuffy DINO Adapter & \( 0.876 \) & \( 0.936 \) & \( 0.058 \) & \( 0.911 \) & \( 0.552 \) & \( 6.7 \) & \( 4.8 \) & \( 1,536 \) & \( 23.7 \)
    \\
    Snuffy DINO Full-tune & \( 0.761 \) & \( 0.756 \) & \( 0.195 \) & \( 0.881 \) & \( 0.381 \) & \( 6.7 \) & \( 4.8 \) & \( 1,658 \) & \( 47.5 \)
    \\
    Snuffy DINO Exhaustive & \( 0.948 \) & \( \mathbf{0.987} \) & \( 0.083 \) & \( 0.957 \) & \( \mathbf{0.675} \) & \( 6.7 \) & \( 4.8 \) & \( 15,393 \) & \( 47.5 \)
    \\
    Snuffy MAE Adapter & \( 0.900 \) & \( 0.910 \) & \( 0.078 \) & \( 0.873 \) & \( 0.543 \) & \( 11.7 \) & \( 4.8 \) & \( 1,056 \) & \( 8.1 \)
    \\
    Snuffy MAE Full-tune & \( 0.782 \) & \( 0.754 \) & \( 0.134 \) & \( 0.875 \) & \( 0.363 \) & \( 11.7 \) & \( 4.8 \) & \( 1,216 \) & \( 113.66 \)
    \\\hline
    \end{tabular}
    }
    \caption{Results on The CAMELYON16 dataset \cite{camelyon16}. Exhaustive is for SSL models trained from scratch, Full-tune for models trained from ImangeNet-1K pre-trained weights, and adapter for models trained with our proposed method. Patch is for patch classification and ROI detection. \(\sum\)Time shows the sum of pre-training time and MIL training time. \(\sum\)\#Params shows the sum of pre-trained parameters and MIL-trained parameters, and \( - \) shows the model is incapable of or has not implemented a quantifiable method for ROI detection. Mem and GPU Mem numbers are not the most accurate due to randomness in CUDA and parallel runs but accurate enough to give a good comparison.}
    \label{tablecamelyon}
\end{table*}

\begin{table*}[t]
    \begin{minipage}{0.35\linewidth}
    \setlength{\tabcolsep}{5pt}
    \centering
    \resizebox{\textwidth}{!}{
    \begin{tabular}{l|cc}
    \hline
    \multicolumn{1}{c}{Method} &
    \multicolumn{2}{c}{Slide} \\
    \cline{2-3}
    \multicolumn{1}{c}{} &
    ACC & AUC \\\hline
    DGMIL \cite{dgmil} & \( 0.920 \) & \( 0.970 \)
    \\
    TransMIL \cite{transmil} & \( 0.883 \) & \( 0.949 \)
    \\
    CLAM-SB \cite{clam} & \( 0.875 \) & \( 0.944 \)
    \\
    CLAM-MB \cite{clam} & \( 0.878 \) & \( 0.949 \)
    \\
    HIPT \cite{hipt} & \( NA \) & \( 0.952 \)
    \\
    KAT (w/o KCL) \cite{kat} & \( 0.849 \) & \( 0.965 \)
    \\
    KAT (w/ KCL) \cite{kat} & \( 0.859 \) & \( 0.971 \)
    \\
    DTFD-MIL (MaxS) \cite{dtfd} & \( 0.855 \) & \( 0.910 \)
    \\
    DTFD-MIL (MaxMinS) \cite{dtfd} & \( 0.890 \) & \( 0.938 \)
    \\
    DTFD-MIL (AFS) \cite{dtfd} & \( 0.898 \) & \( 0.946 \)
    \\\hline
    Snuffy SimCLR Exhaustive & \( \mathbf{0.947} \) & \( \mathbf{0.972} \)
    \\\hline
    \end{tabular}
    }
    \caption{Results on The TCGA dataset \cite{tcga}.}
    \label{tabletcga}
    \end{minipage}
\hfill
    \begin{minipage}{0.63\linewidth}
    \centering
    \resizebox{\textwidth}{!}{
    \begin{tabular}{l|cc|cc|cc}
    \hline
    \multicolumn{1}{c}{Method} &
    \multicolumn{2}{c}{MUSK1} &
    \multicolumn{2}{c}{MUSK2} &
    \multicolumn{2}{c}{ELEPHANT} \\
    \cline{2-7} 
    \multicolumn{1}{c}{} &
    ACC & \multicolumn{1}{c}{AUC} & ACC & \multicolumn{1}{c}{AUC} & ACC & \multicolumn{1}{c}{AUC}
    \\\hline
    Max-pooling & \( 0.728\) & \( 0.799\) & \( 0.676\) & \( 0.756\) & \( 0.764\) & \( 0.861\)
    \\
    Mean-pooling & \( 0.800\) & \( 0.869\) & \( 0.710\) & \( 0.855\) & \( 0.830\) & \( 0.920\)
    \\
    ABMIL \cite{abmil} & \( 0.826\) & \( 0.824\) & \( \mathbf{0.812}\) & \( 0.816\) & \( 0.849\) & \( 0.84\)
    \\
    ABMIL-Gated \cite{abmil} & \( 0.831\) & \( 0.837\) & \( 0.780\) & \( 0.784\) & \( 0.789\) & \( 0.844 \) 
    \\
    DSMIL \cite{dsmil} & \( 0.786 \) & \( 0.852 \) & \( 0.706 \) & \( 0.813\) & \( 0.811\)  & \( 0.915 \)
    \\\hline
    Snuffy & \( \mathbf{0.961} \) & \( \mathbf{0.989 } \) & \( 0.789 \) & \( \mathbf{0.985} \) & \( \mathbf{0.923} \) & \( \mathbf{0.967} \)
    \\\hline
    \end{tabular}
    }
    \caption{Results on MUSK1, MUSK2 \cite{muskmildatasets}, ELEPHANT \cite{animalmildatasets}.}
    \label{tablemildatasets}
    \end{minipage}
    \end{table*} 

\section{Ablation Study}
\label{ablation}

In this section, we explore the impact and efficacy of key components of Snuffy on the classification and ROI detection performance on the CAMELYON16 dataset \cite{camelyon16}. These experiments are done with Snuffy simCLR Exhaustive.

\textbf{Effect of Depth:} The architecture's depth, evaluated at 1, 2, 4, and 5 layers, is analyzed to determine its influence on the model's performance as a universal approximator. As depicted in Fig. \ref{fig:depth}, an increase in depth, generally but slightly correlates with enhanced performance and complies with our theoretical insights.

\textbf{Effect of Number of Random Global Attentions:} The examination of the number of Random Global Attentions, at 1, 300, and 700, demonstrates an increase in performance with higher numbers, as illustrated in Fig. \ref{fig:randomglobal}. This and the observation of diminishing returns beyond 1000 aligns with the theoretical insights.

For more ablation studies check Supplementary Material \ref{additional_ablation}.

\begin{figure}[!htbp]
    \centering
    \begin{minipage}{0.5\textwidth}
        \centering
        \includegraphics[width=1\textwidth]{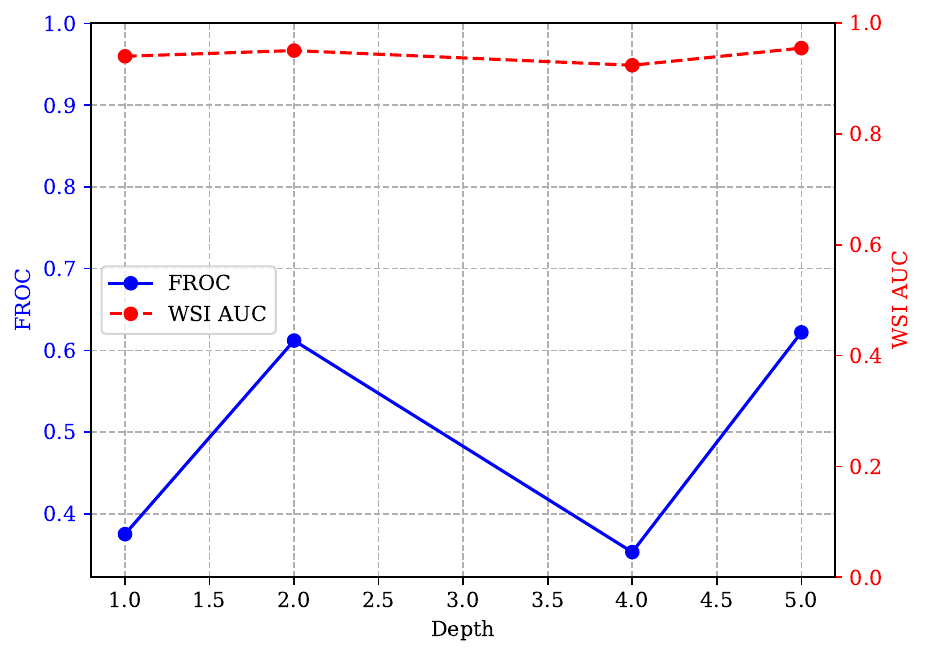} 
        \caption{Ablation on depth.}
        \label{fig:depth}
    \end{minipage}\hfill
    \begin{minipage}{0.5\textwidth}
        \centering
        \includegraphics[width=1\textwidth]{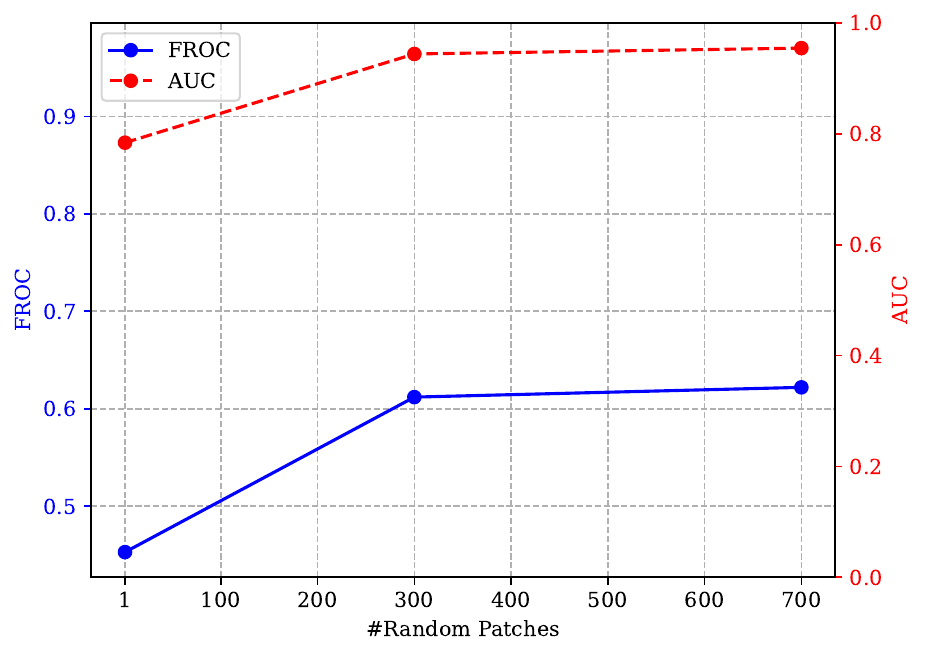} 
        \caption{Random Global Attentions.}
        \label{fig:randomglobal}
    \end{minipage}
\end{figure}

\section{Conclusion and Discussion}
We introduced a novel WSI classification framework using PEFT for data efficiency in SSL and a sparse transformer inspired by pathologists. This approach ensures global approximation with high probability and provides a tighter bound for the number of layers in sparse transformers for MIL-pooling, achieving excellent results. However, achieving SOTA results still requires long and exhaustive SSL training. Our Theoretical guarantees need a high number of layers, increasing memory needs (though one layer performs very well empirically). Future work could explore PEFT methods tailored to pathology and develop less resource-intensive MIL-pooling methods. Another avenue is closing the theoretical and practical gap in understanding sparse transformers.


\section*{Acknowledgements}
\label{ack}
We thank Mohammad Mosayyebi, Mehrab Moradzadeh, Mohammad Hosein Movasaghinia, Mohammad Azizmalayeri, Hossein Mirzaei, Mohammad Mozafari, Soroush Vafaei Tabar, Mohammad Hassan Alikhani, and Hosein Hasani.

%
%
\bibliographystyle{splncs04}
\bibliography{main}

\clearpage
\pagenumbering{arabic}
\setcounter{page}{1}

\setcounter{page}{1}

\title{Snuffy: Efficient Whole Slide Image Classifier (Supplementary Materials)} 

\titlerunning{Snuffy Supplementary}

\author{Hossein Jafarinia\inst{1}\orcidlink{0009-0003-4172-8372} \and
Alireza Alipanah\inst{1}\orcidlink{0009-0000-1292-9296} \and 
Danial Hamdi\inst{2}\orcidlink{0009-0005-1419-3338} \and
Saeed Razavi\inst{1}\orcidlink{0009-0009-0760-0758} \and 
Nahal Mirzaie\inst{1}\orcidlink{0009-0003-6954-7151} \and 
Mohammad Hossein Rohban\inst{1}\orcidlink{0000-0001-6589-850X} \thanks{Corresponding author.}}

\authorrunning{H. Jafarinia et al.}

\institute{Sharif University of Technology, Tehran, Iran \\
\email{\{jafarinia, alireza.alipanah46, saeed.razavi, nahal.mirzaie, rohban\}@sharif.edu}\\
\url{https://www.sharif.edu}
\and Amirkabir University of Technology (Tehran Polytechnic), Tehran, Iran \\
\email{danial.hamdi@outlook.com}\\
\url{https://aut.ac.ir}
}

\maketitle

\section{Theory}
\begin{proposition}\label{propositionproof}
    Every graph $G(E,V)$ with $E$ set of edges and $V$ as set of nodes, with $\rvert V\lvert \geq3$ and $\alpha(G) \leq \chi(G)$ has a Hamiltonian cycle. Where $\alpha(G)$ is the maximum independent set, and $\chi(G)$ is the chromatic number of $G$.
\end{proposition}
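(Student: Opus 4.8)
The plan is to reduce the statement to the classical Chv\'atal--Erd\H{o}s theorem, which asserts that a graph $G$ on $|V|\geq 3$ vertices whose vertex connectivity $\kappa(G)$ is at least its independence number $\alpha(G)$ admits a Hamiltonian cycle. So I would first try to show that the chromatic hypothesis $\alpha(G)\leq\chi(G)$ forces the connectivity inequality $\kappa(G)\geq\alpha(G)$, after which the conclusion follows immediately by invoking Chv\'atal--Erd\H{o}s as a black box. The natural auxiliary fact to keep in hand is the elementary counting bound $n\leq\chi(G)\,\alpha(G)$, obtained by noting that each of the $\chi(G)$ classes of an optimal proper coloring is an independent set and hence has at most $\alpha(G)$ vertices; this at least shows the hypothesis is not vacuous and gives a global handle on the sizes involved.

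If the reduction to connectivity turns out to be too lossy, the fallback is to reprove Hamiltonicity directly through the rotation--extension argument that underlies Chv\'atal--Erd\H{o}s. Concretely, I would take a longest cycle $C$ in $G$, assume for contradiction that $C$ is not spanning, and select a vertex $v$ in some component $H$ of $G-V(C)$. Fixing an orientation of $C$ and recording, for each attachment vertex of $H$ on $C$, its successor, one argues that these successors are pairwise non-adjacent and non-adjacent to $H$ (otherwise a strictly longer cycle exists). Together with $v$ this yields an independent set, and bounding its cardinality from below and comparing it against $\alpha(G)$ produces the contradiction that makes $C$ spanning. The task would then be to drive this count so that the governing lower bound is expressed through $\chi(G)$ rather than through $\kappa(G)$.

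The hard part will be exactly this bridge from the chromatic number to the independence/connectivity count that the rotation argument actually manufactures. Vertex connectivity and chromatic number are not monotonically related in general, so $\alpha(G)\leq\chi(G)$ does not by itself deliver the vertex cuts the Chv\'atal--Erd\H{o}s machinery consumes, and the counting bound $n\leq\chi(G)\,\alpha(G)$ controls global size but not the local cut structure near a would-be longest cycle. I therefore expect the proof to require an additional handle on how the color classes meet $C$, so as to convert the chromatic hypothesis into an independent set larger than $\alpha(G)$; this conversion is where I anticipate the real work to lie, and it is the step I would scrutinize most carefully before trusting the conclusion.
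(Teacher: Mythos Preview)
Your suspicion in the final paragraph is the whole story: the bridge from $\chi(G)$ to $\kappa(G)$ does not exist, because the proposition as stated is false. Two disjoint triangles have $|V|=6$, $\alpha=2$, $\chi=3$, hence $\alpha\leq\chi$, yet no Hamiltonian cycle; even among connected graphs the path $P_4$ has $\alpha=\chi=2$ and no Hamiltonian cycle. So neither your first plan (reduce to Chv\'atal--Erd\H{o}s via $\alpha\leq\chi\Rightarrow\kappa\geq\alpha$) nor your fallback (rerun the rotation--extension argument with $\chi$ in place of $\kappa$) can be completed, and the obstruction is not a missing trick but an outright counterexample.

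The paper does not prove the proposition either: its entire argument is a citation to a graph theory textbook, and the result being pointed to is precisely the Chv\'atal--Erd\H{o}s theorem, which is stated there with the vertex connectivity $\kappa(G)$, not the chromatic number $\chi(G)$. The paper has conflated the two symbols. For the particular graph $G_S$ used downstream this is nearly harmless, since there $\kappa(G_S)=|\Lambda|$ while $\chi(G_S)=|\Lambda|+1$, so correcting the proposition only shifts the threshold in the subsequent lemma from $(n-1)/2$ to $\lceil n/2\rceil$; but as a freestanding graph-theoretic statement the proposition you were asked to prove is simply wrong, and you were right to distrust it.
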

\begin{proof}
A detailed proof can be found in Chapter 10, Proposition 10.1.2 of the book by \cite{graphtheory}.
\end{proof}

\begin{lemma}\label{lemmaproof}
    For $G_S$, the graph representing Snuffy sparsity patterns, we guarantee that there exist a Hamiltonian cycle if
    \begin{equation}
        \rvert \bigcup\limits_{l\in[L]} \Lambda^l\lvert \geq \frac{n-1}{2}
    \end{equation}
    Where $\Lambda^l$ is set of patches selected in layer $l$ which consists of $\Lambda_{top}$ and $\Lambda_{r}^l$.
\end{lemma}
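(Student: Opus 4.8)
The plan is to pin down the two graph invariants $\alpha(G_S)$ (independence number) and $\chi(G_S)$ (chromatic number) exactly, and then feed them into Proposition~\ref{graphtheory}, whose hypothesis $|V| = n \geq 3$ is assumed. First I would unpack the combinatorial structure of $G_S$ from Definition~\ref{sparsitypattern}. A vertex $k \in \Lambda := \bigcup_{l\in[L]}\Lambda^l$ lies in some $\Lambda^{l_0}$, so its pattern $A_k^{l_0} = \{k\}\cup[n]$ makes $k$ adjacent to every other vertex; conversely a vertex $k \notin \Lambda$ has $A_k^l = \{k\}\cup\Lambda^l$ for \emph{every} $l$, so it is adjacent only to vertices of $\Lambda$. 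Two vertices both outside $\Lambda$ therefore never share an edge, and the edge relation is symmetric whenever at least one endpoint lies in $\Lambda$, so $G_S$ may safely be treated as an undirected graph.

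From this structure, $[n]\setminus\Lambda$ is an independent set of size $n-|\Lambda|$, and it is \emph{maximum}: any independent set containing a vertex of $\Lambda$ must be a singleton, since such a vertex is universal. Hence $\alpha(G_S) = n - |\Lambda|$ when $\Lambda \neq [n]$ (the degenerate case $\Lambda=[n]$ makes $G_S$ complete, hence trivially Hamiltonian). Dually, the vertices of $\Lambda$ form a clique, so they require $|\Lambda|$ distinct colors, while all vertices of $[n]\setminus\Lambda$ can share one further color because they are pairwise non-adjacent; this yields a proper $(|\Lambda|+1)$-coloring, and no fewer colors suffice since a clique of size $|\Lambda|$ together with one outside vertex adjacent to all of it forces $|\Lambda|+1$. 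Thus $\chi(G_S) = |\Lambda| + 1$.

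Substituting into Proposition~\ref{graphtheory}, the inequality $\alpha(G_S) \leq \chi(G_S)$ becomes $n - |\Lambda| \leq |\Lambda| + 1$, i.e. $|\Lambda| \geq \tfrac{n-1}{2}$, which is precisely the hypothesis of the lemma; so $G_S$ admits a Hamiltonian cycle. Deleting any one edge of that cycle gives a Hamiltonian path, which is exactly what condition~\ref{three2} of Theorem~\ref{three} requires for the main-text form of the lemma. I would also record the small orientation check: every edge of such a path has at least one endpoint in $\Lambda$ (non-$\Lambda$ vertices being mutually non-adjacent), hence is bidirectional, so the directed requirement in condition~\ref{three2} is met regardless of how the path is traversed.

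The main obstacle is not a single hard argument but making the two invariant computations fully airtight: carefully ruling out independent sets larger than $[n]\setminus\Lambda$ that mix in $\Lambda$-vertices, verifying that $|\Lambda|+1$ colors are genuinely necessary, and cleanly disposing of the boundary cases $|\Lambda|\in\{0,\,n-1,\,n\}$ and small $n$ so that Proposition~\ref{graphtheory} applies. Everything after that is a one-line substitution.
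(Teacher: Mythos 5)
Your proof is correct and follows essentially the same route as the paper: identify $\Lambda$ as a clique of universal vertices and $[n]\setminus\Lambda$ as the maximum independent set, compute $\alpha(G_S)=n-|\Lambda|$ and $\chi(G_S)=|\Lambda|+1$, and feed these invariants into Proposition~\ref{graphtheory}. Your write-up is in fact somewhat more careful than the paper's own proof --- you verify maximality of the independent set, justify the lower bound on the chromatic number, dispose of the degenerate cases, and state the key inequality as $n-|\Lambda|\leq|\Lambda|+1$ with the correct orientation (the main-text version of this step has the inequality sign reversed, apparently a typo).
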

\begin{proof}
    Let's define $\Lambda = \bigcup_{l\in[L]} \Lambda^l$ and $N' = N\backslash \Lambda$. At each layer, the selected patches represent global attentions, thus we can consider $\Lambda$ as a clique in $G_S$, with every node in $N'$ attending to them. Moreover, in each layer, at most $\lambda_r$ nodes from $N'$ are added to $\Lambda$ (see Fig.~\ref{fig:lemma_bipar}).
    \begin{figure}
        \centering
        \includegraphics[width=0.4\textwidth]{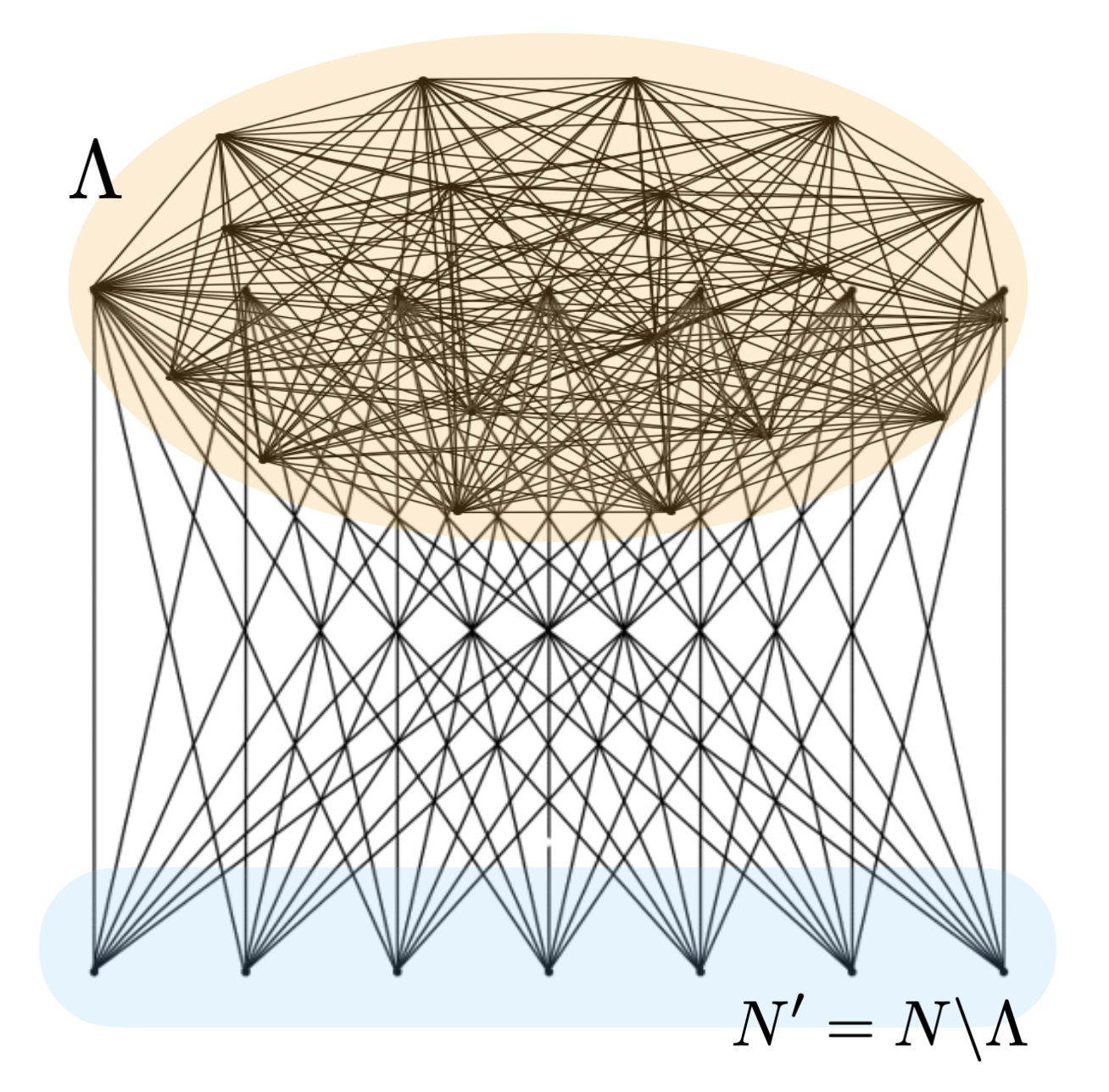}
        \caption{Graphical representation of the Snuffy sparsity patterns graph $G_S$ up to layer $l$. $\Lambda$ represents the set of patches that have been observed, while $N'$ denotes the set of patches that have not been covered. Please note that self-loops for all nodes are omitted for simplicity.}
        \label{fig:lemma_bipar}
    \end{figure}
    It is evident that the chromatic number of $G_S$, denoting the minimum number of colors required to ensure that no two neighboring nodes share the same color, is equal to $|\Lambda| + 1$. This arises from the necessity of using $|\Lambda|$ colors to color the nodes in $\Lambda$ and an additional color to color the rest of the nodes in $N'$. On the other hand, the maximum independent set of patches in $G_S$ is equal to $N'$. By leveraging Proposition~\ref{propositionproof} and the fact that $|N' \cup \Lambda| = n$, we establish the lemma. $\blacksquare$
\end{proof}

\begin{theorem}\label{maintheoremproof}
    If $\lambda_r = \mathcal{O}(n)$ where n is number of patches, and $\lambda_r = \rvert \Lambda_r \lvert$, then the number of layers $L$ needed to prove that the Snuffy sparsity pattern (defined in ~\ref{sparsitypattern}) is universal approximator sparsity patterns is sharply concentrated around $\frac{n\log{2}}{\lambda_r}$. More precisely, we have:
    \begin{equation}
        \lim_{n \to \infty} \mathbb{P}( \rvert L - \frac{n\log{2}}{\lambda_r} \lvert > \frac{c\sqrt{n}}{\lambda_r}) = 1 - \Phi(c)
    \end{equation}
\end{theorem}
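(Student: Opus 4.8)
The plan is to use Lemma~\ref{lemma} to replace the universal-approximation guarantee by a purely combinatorial coverage condition, recognise the resulting quantity as the hitting time of a ``collect half the coupons'' process in which $\lambda_r$ fresh coupons arrive per round, and then prove a central limit theorem for that hitting time. \textbf{Step 1 (reduction to coverage).} By Lemma~\ref{lemma}, the pattern of Definition~\ref{sparsitypattern} satisfies the only nontrivial hypothesis of Theorem~\ref{three} — the Hamiltonian-path condition, the other two being immediate — as soon as $|\Lambda| \ge \frac{n-1}{2}$, where $\Lambda = \Lambda_{top}\cup\bigcup_{l=1}^{L}\Lambda^l_r$. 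Since $\Lambda_{top}$ is the same in every layer and $\lambda_{top} = |\Lambda_{top}|$ is a fixed constant (more generally $o(\sqrt n)$), it suffices, up to a deterministic shift that is $o(\sqrt n)$ and therefore below the resolution $\sqrt n/\lambda_r$ of the claim, to ask for the first layer at which $\bigcup_{l\le L}\Lambda^l_r$ covers at least $\frac{n-1}{2}$ patches. Thus $L = \min\{\ell \ge 1 : U_\ell \ge \frac{n-1}{2}\}$, where $U_\ell := |\bigcup_{l\le\ell}\Lambda^l_r|$ and the $\Lambda^l_r$ are independent, each uniform over the $\lambda_r$-subsets of a ground set of size $n-\lambda_{top} = n(1-o(1))$.

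\textbf{Step 2 (the partial coupon-collector CLT).} I would first treat the one-draw-at-a-time version. The number $T_m$ of uniform draws needed to see $m = \lfloor n/2\rfloor$ distinct coupons is $T_m = \sum_{i=0}^{m-1} G_i$ with the $G_i$ independent and $G_i \sim \mathrm{Geom}(\frac{n-i}{n})$. Summing means and variances gives $\mathbb E[T_m] = n(H_n - H_{n-m}) = n\log 2 + O(1)$ and $\mathrm{Var}(T_m) = \sum_{i=0}^{m-1}\frac{ni}{(n-i)^2} = (1-\log 2)\,n + o(n)$; since every $G_i$ has success probability $p_i > \frac12$, the summands are uniformly sub-exponential and Lyapunov's condition holds, so $\frac{T_m - n\log 2}{\sqrt n} \Rightarrow \mathcal N(0, 1-\log 2)$. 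This is the classical Erd\H{o}s--R\'enyi / Baum--Billingsley limit for collecting a constant fraction of the coupons.

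\textbf{Step 3 (group draws and inversion).} To pass from $T_m$ to the batched layer count $L$, I would couple the two processes: a fixed uncovered patch is missed by one layer with probability $1-\lambda_r/n$, which for $\lambda_r = o(n)$ equals $(1-1/n)^{\lambda_r}(1+o(1))$, and the per-round coverage increments of the two processes agree to first order, with matching second moments (using that the coverage indicators $\mathbf 1[i\text{ uncovered after }\ell\text{ layers}]$ are negatively associated). Hence after $\ell$ layers $U_\ell$ is within $o(\sqrt n)$ of the coverage of the single-draw process run for $\ell\lambda_r$ steps, so $L = \frac{1}{\lambda_r} T_{\lfloor n/2\rfloor}(1+o(1))$. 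Inverting the hitting time through $\{L > \ell\} = \{U_\ell < \frac{n-1}{2}\}$ and applying Slutsky's theorem transfers the CLT of Step~2 to $L$ on the scale $\sqrt n/\lambda_r$, which is exactly the Gaussian concentration window asserted in the theorem (the explicit limiting variance being absorbed into the scaling of $c$). Alternatively, one can bypass the coupling and run a direct negative-association (or exchangeable-pairs) CLT for $U_\ell = n - \sum_i \mathbf 1[i\text{ uncovered}]$ and invert.

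\textbf{Main obstacle.} The mean is a one-line harmonic-number estimate, so the real work is controlling the fluctuations of the \emph{batched} collector: because one layer can contribute several new patches, the waiting times between successive new patches are not independent and the clean sum-of-geometrics CLT does not apply verbatim to $L$. Making the coupling (or the direct negative-association CLT) rigorous and uniform over $\lambda_r = \mathcal O(n)$ — and verifying that the $\Lambda_{top}$ offset, the rounding of $n/2$ and of $n\log 2/\lambda_r$, and the discrepancy between $1-\lambda_r/n$ and $e^{-\lambda_r/n}$ all remain inside the $\sqrt n/\lambda_r$ window — is where the care lies; the inversion step is then routine.
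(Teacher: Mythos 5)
Your proposal follows essentially the same route as the paper: reduce universal approximation to covering $\lfloor n/2\rfloor$ patches via Lemma~\ref{lemma}, prove a CLT for the singleton partial coupon collector at the half mark, and transfer it to the $\lambda_r$-batched process by a coupling before inverting the hitting time. The only noteworthy divergences are in the sub-steps: you derive the singleton CLT directly from the sum-of-geometrics decomposition with variance $(1-\log 2)n$ (which is the correct order and is consistent with the $\sqrt{n}/\lambda_r$ window, whereas the paper quotes the Baum--Billingsley variance as $\sigma_n^2 \sim n^{1-\log 2}$), and the paper realizes your Step~3 concretely by blocking the singleton draw sequence into runs that each accumulate $\lambda_r$ distinct coupons and applying a concentration lemma to the block lengths.
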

\begin{proof}
    In Section \ref{univ_appr}, we illustrate that for our proposed Snuffy sparsity patterns to be universal approximator sparsity patterns, it is sufficient to satisfy three conditions as outlined in Theorem \ref{three}. We also establish that Conditions 1 and 3 are straightforward, and it is only needed to confirm the satisfaction of condition 3. In this regard, in Lemma \ref{lemma}, we discuss that observing $\lfloor{\frac{n}{2}}\rfloor$ patches is sufficient  to ensure that the graph associated with the sparsity patterns is Hamiltonian, thereby satisfying Condition 3. 

    Observing at least $\lfloor{\frac{n}{2}}\rfloor$ patches within our context equates to a generalized coupon collector problem. In a classic coupon collector scenario with $n$ distinct types of coupons, we randomly select singleton coupons with uniform probability, aiming to eventually collect all types of coupons. In our scenario, we can analogously envision selecting $\lambda_r$ random patches uniformly, with the objective of acquiring at least $\lfloor{\frac{n}{2}}\rfloor$ of all patches. Notably, for $\lambda_{top} = \mathcal{O}(\lambda_r)$ we can presume that the $\lambda_{top}$ patches are initially selected, and their presence in subsequent layers does not change the threshold.

    Let $T_{n,\frac{n}{2}}=L$ denote a variable representing the number of steps or, in our case, the number of layers, at which for the first time at least $\lfloor{\frac{n}{2}}\rfloor$ patches have been observed. Our goal is to show that $T_{n,\frac{n}{2}}$ is concentrated at $\frac{n\log{2}}{\lambda_r}$.
    To achieve this, \textbf{1)} we begin by revisiting \cite{baum} theorem for concentration $T_{n,m}$, which pertains to the number of steps required to pick $m$ coupons out of $n$ coupons in the classic coupon collector problem where only one coupon is chosen at each step.  \textbf{2)} We then extend this bound to the $\lambda_r$-uniform coupon collector problem by coupling the $\lambda_r$-uniform coupon sequence with the sequence obtained by the classic singleton coupon collector.
\end{proof} 

\subsection*{Revisiting Baum's Theorem: Concentration of $T_{n,\frac{n}{2}}$ }
    \begin{theorem}\label{baum}
        For $T_{n,m}$ representing the number of steps required to pick $m$ coupons out of $n$ coupons in the classic coupon collector problem, where $m$ approaches infinity alongside $n$, but at a rate slow enough to allow the sequence $\frac{n - m}{\sqrt{n}}$ to also tend to infinity, we have
        \begin{equation}
            \lim_{n \to \infty} \mathbb{P}(\frac{T_{n,m} - \mathbb{E}T_{n,m}}{\sigma_n} \leq c) = \Phi(c)
        \end{equation}
        where for $\frac{m}{n} \rightarrow d$, $\sigma_n^2 \sim n^{\frac{1-d+d \log d}{d}}$ \cite{baum}.
    \end{theorem}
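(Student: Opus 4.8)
The plan is to reduce Theorem~\ref{baum} to a central limit theorem for a triangular array of independent, non-identically distributed summands, for which the classical Lindeberg--Feller machinery applies. First I would exploit the memorylessness of the coupon process: once exactly $j$ distinct coupons have been collected, each subsequent draw is ``new'' with probability $p_j = (n-j)/n$ independently of the past, so the waiting time $W_j$ to move from $j$ to $j+1$ distinct coupons is geometric with parameter $p_j$, and the $W_j$ are mutually independent. This gives the exact decomposition $T_{n,m} = \sum_{j=0}^{m-1} W_j$, turning the problem into a sum of independent random variables.

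Second, I would compute the first two moments in closed form. By linearity, $\mathbb{E}\,T_{n,m} = \sum_{j=0}^{m-1} n/(n-j) = n\,(H_n - H_{n-m})$ with $H_n$ the $n$-th harmonic number, and by independence $\sigma_n^2 = \mathrm{Var}(T_{n,m}) = \sum_{j=0}^{m-1} jn/(n-j)^2$. Integral (Euler--Maclaurin) comparison of these sums then yields the claimed asymptotic order of $\sigma_n^2$ in terms of $d = \lim m/n$. Writing the centered summands $X_{n,j} = W_j - \mathbb{E}\,W_j$, the theorem is exactly the assertion $\sum_{j} X_{n,j}/\sigma_n \Rightarrow N(0,1)$, which follows once the Lindeberg condition for the array $\{X_{n,j}\}$ is verified, giving $\mathbb{P}\big((T_{n,m}-\mathbb{E}\,T_{n,m})/\sigma_n \le c\big)\to\Phi(c)$.

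Third, I would verify Lindeberg, most cleanly through the stronger Lyapunov condition, which is available because geometric variables possess moments of every order. Standard geometric moment bounds give $\mathbb{E}\,|X_{n,j}|^{3} = O\big((n/(n-j))^{3}\big)$, so the sum of third absolute moments is dominated by the largest-index term $j = m-1$, whose parameter $p_{m-1}\approx (n-m)/n$ is smallest and whose variance is of order $(n/(n-m))^{2}$. The whole argument therefore reduces to showing that this heaviest term is negligible relative to $\sigma_n^2$. The hypothesis $(n-m)/\sqrt{n}\to\infty$ is precisely what forces the Lyapunov quotient to $0$: it guarantees that no single geometric summand dominates. In the complementary near-complete-collection regime, where $n-m$ fails to grow, a few comparably large geometrics survive and the limit degenerates to a Gumbel-type law rather than a Gaussian, which is why the condition cannot be removed.

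The main obstacle is this last step: obtaining uniform control of the heavy, low-success-probability geometric summands near $j=m-1$ and pinning down the exact asymptotics of $\sigma_n^2$, so that the Lyapunov (or Lindeberg) ratio is shown to vanish under $(n-m)/\sqrt{n}\to\infty$. Once that negligibility is established, reading off the standard normal limit $\Phi(c)$ from the Lindeberg--Feller conclusion, after the stated centering by $\mathbb{E}\,T_{n,m}$ and scaling by $\sigma_n$, is routine.
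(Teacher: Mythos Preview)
The paper does not supply its own proof of Theorem~\ref{baum}; the result is quoted verbatim from \cite{baum} and then immediately specialized to $m=n/2$ in Lemma~\ref{lem:half} without further argument. There is consequently nothing in the paper to compare your proposal against.

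For what it is worth, your outline---the decomposition $T_{n,m}=\sum_{j=0}^{m-1}W_j$ into independent geometric waiting times, explicit mean and variance, and a Lyapunov verification exploiting $(n-m)/\sqrt{n}\to\infty$ to kill the heaviest summand---is exactly the classical route to this CLT and is essentially how the original Baum--Billingsley argument proceeds. One small caution: an Euler--Maclaurin comparison gives $\sigma_n^2\sim n\bigl(\tfrac{d}{1-d}+\log(1-d)\bigr)$, i.e.\ a linear-in-$n$ variance with constant $1-\log 2$ when $d=1/2$; the exponent form $n^{(1-d+d\log d)/d}$ that appears in the paper's statement (and the $\sqrt{n^{1-\log 2}}$ in Lemma~\ref{lem:half}) looks like a transcription slip rather than something your computation should reproduce.
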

    This theorem shows that $T_{n,m}$ concentrates around its mean with maximum bound of $c\sigma_n$, with a probability of at least $\Phi(c)$, where $\Phi(c)$ represents the cumulative distribution function ($CDF$) of the standard normal distribution $\mathcal{N}(0, 1)$. Substituting $m$ in Theorem \ref{baum} with $\frac{n}{2}$ in our case, we derive the following lemma:
    \begin{lemma}\label{lem:half}
        For $T_{n,\frac{n}{2}}$ representing the number of steps required to pick half of the coupons out of $n$ coupons in the classic coupon collector problem, we have
        \begin{equation}
        \begin{aligned}            
            \lim_{n \to \infty} \mathbb{P}(\frac{T_{n,\frac{n}{2}} - \mathbb{E}T_{n,\frac{n}{2}}}{\sqrt{n^{1-\log 2}}} \leq c) = \Phi(c) \\
        \blacksquare
        \end{aligned}
        \end{equation}
    \end{lemma}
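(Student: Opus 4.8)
The plan is to obtain Lemma~\ref{lem:half} as an essentially immediate specialization of Baum's Theorem~\ref{baum} to the case $m = \lfloor n/2\rfloor$; the only work is (i) checking that the growth hypotheses of Theorem~\ref{baum} are satisfied, (ii) evaluating the variance exponent at $d = \frac{1}{2}$, and (iii) replacing the standardizing sequence $\sigma_n$ by the closed form $\sqrt{n^{1-\log 2}}$ without altering the limit law.

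First I would verify the hypotheses. With $m = m(n) = \lfloor n/2\rfloor$ we have $m \to \infty$, and $\frac{n-m}{\sqrt n} = \frac{\lceil n/2\rceil}{\sqrt n} = \Theta(\sqrt n) \to \infty$, so both conditions of Theorem~\ref{baum} hold, and moreover $m/n \to d := \frac{1}{2}$. (Incidentally $\mathbb{E}T_{n,n/2} = n(H_n - H_{\lceil n/2\rceil}) = n\log 2 + O(1)$, which is what eventually produces the $\frac{n\log 2}{\lambda_r}$ centering in Theorem~\ref{maintheoremproof}, but that exact value is not needed here.)

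Next I would compute the variance asymptotics: substituting $d = \frac{1}{2}$ into the exponent $\frac{1-d+d\log d}{d}$ of Theorem~\ref{baum} gives
\[
\frac{\,1 - \frac{1}{2} + \frac{1}{2}\log\frac{1}{2}\,}{\frac{1}{2}} \;=\; 1 - \log 2 ,
\]
so $\sigma_n^2 \sim n^{1-\log 2}$, equivalently $\sigma_n / \sqrt{n^{1-\log 2}} \to 1$. Writing
\[
\frac{T_{n,n/2} - \mathbb{E}T_{n,n/2}}{\sqrt{n^{1-\log 2}}} \;=\; \frac{T_{n,n/2} - \mathbb{E}T_{n,n/2}}{\sigma_n}\cdot\frac{\sigma_n}{\sqrt{n^{1-\log 2}}},
\]
the first factor converges in distribution to $\mathcal{N}(0,1)$ by Theorem~\ref{baum} and the second converges to the constant $1$, so by Slutsky's theorem the left-hand side converges in distribution to $\mathcal{N}(0,1)$; taking CDFs gives the stated limit $\Phi(c)$. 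Note $1 - \log 2 > 0$, so the normalizer genuinely diverges, consistent with $m = n/2$ lying in the regime the theorem covers.

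The step needing the most care is (iii): Baum's conclusion is tied to that \emph{specific} sequence $\sigma_n$, so for the limit to come out as $\Phi(c)$ (and not $\Phi(c/\kappa)$ for some $\kappa \neq 1$) the relation $\sigma_n^2 \sim n^{1-\log 2}$ must hold in the sharp sense that the ratio tends to $1$. I would therefore invoke Baum's asymptotic in its $(1+o(1))$ form; any slowly varying correction in $\sigma_n$ is harmless because it is absorbed into the $o(1)$ factor in the Slutsky step, and the conclusion is insensitive to polylogarithmic factors in the normalizer. The floor/ceiling rounding of $n/2$ is likewise harmless, since it perturbs $m/n$ only by $O(1/n)$ and hence changes neither $d$ nor the exponent.
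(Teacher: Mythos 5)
Your proposal is correct and follows essentially the same route as the paper, which likewise obtains the lemma by substituting $m=\frac{n}{2}$ (so $d=\frac12$ and exponent $\frac{1-d+d\log d}{d}=1-\log 2$) into Baum's Theorem~\ref{baum}. The only difference is that you spell out the hypothesis check and the Slutsky step for replacing $\sigma_n$ by $\sqrt{n^{1-\log 2}}$, which the paper leaves implicit.
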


\subsection*{$\lambda_r$-uniform coupon collector}
To generalize the bound in Lemma~\ref{lem:half} for the $\lambda_r$-uniform coupon collector problem, we follow the proof provided in \cite{gcc} for coupling any $\lambda_r$-uniform coupon sequence to the sequence of coupons received by the classic singleton coupon collector. 

Let $Y_i \sim \mathbf{Y}$ denote the classical random covering variable for $\Lambda$. We define the smallest set $\{a_0, a_1, \ldots\}$ such that $|\bigcup_{i\in[a_i, a_{i+1}]} Y_i| = \lambda_r$. Intuitively, if we start from the beginning of the sequence of $\mathbf{Y}$, $a_1$ is the first time when we encounter $\lambda_r$ distinct coupons. In a greedy manner, $a_2$ is then the first time after $a_1$ that we again collect $\lambda_r$ distinct coupons, and so forth. We can then declare $X_i = \bigcup_{i\in[a_i, a_{i+1}]} Y_i$ to denote the random covering variable for $\Lambda$, obtained by uniformly selecting a $\lambda_r$-set from $\Lambda$. It is clear that both $\mathbf{Y}$ and $\mathbf{X}$ are independent random variables and they are related by the following equation:
\begin{equation}\label{eq:xy}
    T_{n,\frac{n}{2}}(\mathbf{X}) = \bigcup\limits_{j=1}^t X_j = \bigcup\limits_{i=1}^{a_t} Y_i = T_{n,\frac{n}{2}}(\mathbf{Y})
\end{equation}
It is easy to infer from equation~\ref{eq:xy} that: 
\begin{equation}\label{eq:xy_2}
   T_{n,\frac{n}{2}}(\mathbf{X}) \leq t \iff T_{n,\frac{n}{2}}(\mathbf{Y}) \leq a_t
\end{equation}

We can define $l_i = a_{i+1}-a_i$ and $S_m = \sum_{i\in[m]} l_i$ respectively. Then using the equation~\ref{eq:xy_2} we can find a lower and upper bound for $T_{n,\frac{n}{2}}(\mathbf{Y})$.

\begin{equation}
\begin{aligned}
   \sum_{i\in[T_{n,\frac{n}{2}}(\mathbf{X})-1]} l_i = a_{T_{n,\frac{n}{2}}(\mathbf{X}) -1 } < T_{n,\frac{n}{2}}(\mathbf{Y}) \leq
   a_{T_{n,\frac{n}{2}}(\mathbf{X})} = 
   \sum_{i\in[T_{n,\frac{n}{2}}(\mathbf{X})]} l_i 
\end{aligned}
\end{equation}

\begin{equation}
\begin{aligned}
   S_{T_{n,\frac{n}{2}}(\mathbf{X}) -1}  
   < T_{n,\frac{n}{2}}(\mathbf{Y}) \leq 
   S_{T_{n,\frac{n}{2}}(\mathbf{X})}
\end{aligned}\label{eq:xy_3}
\end{equation}

\begin{lemma}\label{lem:gcc}
    If $\lambda_r = \mathcal{O}(n)$ then for all $m > n\lambda_r$ the following inequality holds:
    \begin{equation}
        \mathbb{P}(
            |S_m - \mathbb{E}S_m| > \sqrt{m\lambda_r}
        ) < 4\exp^{-\frac{n}{\lambda_r}}
    \end{equation}
\end{lemma}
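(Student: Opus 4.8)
The plan is to establish the concentration of $S_m = \sum_{i\in[m]} l_i$, the total number of singleton draws in the classical coupon sequence $\mathbf{Y}$ needed to realize $m$ successive ``batches'' of $\lambda_r$ distinct coupons each. The key structural observation is that each increment $l_i = a_{i+1}-a_i$ is itself a sum of $\lambda_r$ independent geometric random variables: within batch $i$, after having already collected some number $r$ of the $n$ coupons, the number of singleton draws until a new coupon appears is geometric with success probability $\tfrac{n-r}{n}$. Since across the first $m$ batches the running count $r$ ranges over $\{0,1,\dots,m\lambda_r-1\}$ (subject to $m\lambda_r \le n$, which is ensured because we only run until $\lfloor n/2\rfloor$ coupons are seen, and the hypothesis $\lambda_r=\mathcal{O}(n)$ controls the regime), we have the clean representation
\begin{equation}
    S_m = \sum_{r=0}^{m\lambda_r - 1} G_r, \qquad G_r \sim \mathrm{Geom}\!\left(\tfrac{n-r}{n}\right),
\end{equation}
with the $G_r$ mutually independent. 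First I would write $\mathbb{E}S_m = \sum_{r=0}^{m\lambda_r-1} \tfrac{n}{n-r}$ and record $\mathrm{Var}(S_r) = \sum_{r} \tfrac{nr}{(n-r)^2}$, both of which are $\Theta(m)$ up to the logarithmic and edge corrections that the $\tilde{}$ notation hides, and in particular are $O(m)$ when $m\lambda_r$ stays bounded away from $n$.

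Next I would apply a Bernstein- or Chernoff-type tail bound for sums of independent geometric variables (e.g.\ the standard bound stating that for independent geometrics a deviation of $t$ above the mean has probability at most $\exp(-c\,t^2/(V + bt))$ where $V$ is the variance proxy and $b$ the maximal ``scale'' $\max_r \tfrac{n}{n-r}$). The target deviation is $t = \sqrt{m\lambda_r}$. Plugging in $V = O(m)$ and $b = O(1)$ in the relevant range gives an exponent of order $-\,m\lambda_r / m = -\lambda_r$... which is the \emph{wrong} direction — so the actual exponent we want, $-n/\lambda_r$, must come from exploiting that $m > n\lambda_r$: then $\sqrt{m\lambda_r} > \sqrt{n}\,\lambda_r$, and more importantly the variance over $m$ batches of size $\lambda_r$ organizes into $m/(n/\lambda_r) \cdot (n/\lambda_r)$ blocks; one treats each block of $n/\lambda_r$ batches as a unit, applies the geometric tail bound per block with deviation $\sqrt{(n/\lambda_r)\cdot\lambda_r} = \sqrt{n}$ against a per-block variance $O(n/\lambda_r \cdot \lambda_r) = O(n)$... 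I would reconcile the bookkeeping so that the per-block failure probability is $\exp(-\Omega(n/\lambda_r))$ and a union bound over the $O(m/n \cdot \lambda_r)$ blocks, absorbed into the constant $4$, yields the claimed $4\exp(-n/\lambda_r)$. The factor $4$ strongly suggests the intended route is simply: (i) split into upper/lower tails (factor $2$), and (ii) apply the two-sided geometric concentration result of \cite{gcc} or \cite{baum} directly with their stated constants, rather than re-deriving Bernstein by hand.

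The cleanest write-up, then, is to cite the coupling and concentration machinery of \cite{gcc}: they prove precisely an inequality of this shape for $\lambda_r$-uniform coupon collection, and the step here is to instantiate their general statement with ``collect half the coupons'' as the stopping target and with the hypothesis $\lambda_r = \mathcal{O}(n)$ guaranteeing we never approach the degenerate regime $r\to n$ where the geometric scales blow up. I would (1) state the geometric decomposition of $S_m$, (2) invoke the sub-exponential tail bound for such sums with the explicit constants, (3) verify that for $m > n\lambda_r$ the mean and variance estimates put the bound in the stated form, and (4) conclude. The main obstacle I anticipate is purely the constant-chasing in step (3): making sure the variance proxy and the scale parameter combine so the exponent is exactly $n/\lambda_r$ (not, say, $n/\lambda_r^2$ or $\sqrt{n}/\lambda_r$), which hinges on correctly tracking how the $m > n\lambda_r$ hypothesis interacts with the fact that only the first $\lfloor n/2\rfloor$ coupons are ever relevant — everything else is routine. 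A secondary subtlety worth a sentence is justifying the independence of the $l_i$'s (already asserted around equation~\ref{eq:xy}) and that the coupling in \eqref{eq:xy_3} is lossless, so that concentration of $S_m$ transfers to $T_{n,\frac{n}{2}}(\mathbf{Y})$ and hence, via \eqref{eq:xy_2}, to $T_{n,\frac{n}{2}}(\mathbf{X}) = L$.
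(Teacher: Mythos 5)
Your bottom line---that this inequality should simply be imported from the group-drawing coupon-collector literature---is exactly what the paper does: its entire proof is the citation ``Lemma 2 in \cite{gcc}.'' As a recommendation for how to discharge the lemma, you therefore land in the same place as the authors.

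The standalone derivation you sketch along the way, however, contains a genuine structural error. You decompose $S_m=\sum_{r=0}^{m\lambda_r-1}G_r$ with $G_r\sim\mathrm{Geom}\bigl(\tfrac{n-r}{n}\bigr)$, i.e.\ with success probabilities that decrease \emph{globally} as coupons accumulate across batches. That is the decomposition of the classical collector's waiting time to gather $m\lambda_r$ \emph{distinct} coupons, not of $S_m$. In the paper's construction the window $[a_i,a_{i+1}]$ only requires $\lambda_r$ coupons distinct \emph{within that window}; types seen in earlier windows count again. Hence the increments $l_i=a_{i+1}-a_i$ are i.i.d., each distributed as $\sum_{j=0}^{\lambda_r-1}\mathrm{Geom}\bigl(\tfrac{n-j}{n}\bigr)$, and $S_m$ is a sum of $m$ i.i.d.\ copies. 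The distinction is not cosmetic: your decomposition forces $m\lambda_r\le n$, which is incompatible with the lemma's hypothesis $m>n\lambda_r$ (it would give $\lambda_r^2<1$), and it is also why your variance proxy ($V=\mathcal{O}(m)$, scale $b=\mathcal{O}(1)$) produces the exponent $-\lambda_r$ that you yourself flag as ``the wrong direction,'' which you then try to repair with a block-partitioning argument you admit you cannot close. With the correct i.i.d.\ decomposition one gets $\mathrm{Var}(l_1)=\sum_{j<\lambda_r}nj/(n-j)^2=\Theta(\lambda_r^2/n)$ when $\lambda_r$ is a bounded fraction of $n$, hence $\mathrm{Var}(S_m)=\Theta(m\lambda_r^2/n)$, and the target deviation $\sqrt{m\lambda_r}$ is $\Theta(\sqrt{n/\lambda_r})$ standard deviations; a single Bernstein bound for sums of i.i.d.\ sub-exponential variables then yields $\exp(-\Theta(n/\lambda_r))$ directly, with the hypothesis $m>n\lambda_r$ serving to keep the deviation in the Gaussian regime of that bound. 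The factor $4$ and the exact constant in the exponent are indeed just the constants of \cite{gcc}, so citing their Lemma 2, as the paper does, remains the cleanest course---but if you do write the argument out, the batch-local (i.i.d.) rather than global geometric decomposition is the step you must fix.
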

\begin{proof}
    Lamma 2 in \cite{gcc}.
\end{proof}
Utilizing equation~\ref{eq:xy_3} and lemma~\ref{lem:gcc}, and substituting $m=T_{n,\frac{n}{2}}(\mathbf{X})$ we can assert the following equation is true with high probability (w.h.p):

\begin{equation}
\begin{aligned}
   \lambda_r(T_{n,\frac{n}{2}}(\mathbf{X}) - 1) - \sqrt{\lambda_r(T_{n,\frac{n}{2}}(\mathbf{X}) - 1)}
   <
    T_{n,\frac{n}{2}}(\mathbf{Y})\\
    T_{n,\frac{n}{2}}(\mathbf{Y})
    \leq
    \lambda_rT_{n,\frac{n}{2}}(\mathbf{X})- \sqrt{\lambda_rT_{n,\frac{n}{2}}(\mathbf{X})}
    \\
    \blacksquare
\end{aligned}
\end{equation}
\\

So far we show that we can $T_{n,\frac{n}{2}}(\mathbf{Y})$ can be bounded by variables of $\lambda_r$ and $T_{n,\frac{n}{2}}(\mathbf{X})$. Also, by lemma~\ref{lem:half} we know that $|T_{n,\frac{n}{2}}(\mathbf{Y}) - \mathbb{E}T_{n,\frac{n}{2}}(\mathbf{Y})| \leq c\sqrt{n^{1-\log2}}$ holds with probability of $\Phi(c)$. Also it is well-known equation that $\mathbb{E}T_{n,\frac{n}{2}}(\mathbf{Y}) = n\log2$. Meaning that the expectation of collecting at least half of the coupons is equal to $n\log 2$. Further, we can argue that $T_{n,\frac{n}{2}}(\mathbf{X}) < \frac{2n\log2}{\lambda_r}$. This can be deduce from lemma~\ref{lem:half}. With all the pieces of the puzzle in place and using the triangle inequality, we have:

\begin{equation}\label{eq:fin}
\begin{aligned}
    \bigl\vert T_{n,\frac{n}{2}}(\mathbf{X}) - \frac{n \log 2}{\lambda_r} 
    \bigr\vert \leq
    \bigl\vert \frac{T_{n,\frac{n}{2}}(\mathbf{Y})}{\lambda_r} - \frac{n\log2}{\lambda_r}
    \bigr\vert +
    \bigl\vert T_{n,\frac{n}{2}}(\mathbf{X}) - \frac{T_{n,\frac{n}{2}}(\mathbf{Y})}{\lambda_r} \bigr\vert \\
    \leq 
    \frac{c\sqrt{n^{1-\log2}}}{\lambda_r} + \frac{\sqrt{T_{n,\frac{n}{2}}(\mathbf{X})}}{\lambda_r} + 1 \\
    \leq
    \frac{c\sqrt{n^{1-\log 2}}}{\lambda_r} +
    \frac{\sqrt{2n\log 2}}{\lambda_r \sqrt{\lambda_r}} + 1\\
    \leq
    \frac{(c+o(1))\sqrt{n}}{\lambda_r}
\end{aligned}
\end{equation}

Equation~\ref{eq:fin} indicates that $\bigl\vert T_{n,\frac{n}{2}}(\mathbf{X}) - \frac{n \log 2}{\lambda_r} 
    \bigr\vert \leq \frac{c\sqrt{n}}{\lambda_r}$ with probability at least $\Phi(c)$. By considering the complement of this event, we have proven Theorem \ref{maintheoremproof}. $\blacksquare$

\section{Datasets}
\label{supdatasets}

\textbf{CAMELYON16 dataset} is a widely utilized and publicly available dataset for WSI classification in breast cancer, encompasses 270 training and 129 testing WSIs, delineated into tumor and normal classes. It also offers pixel-level annotations for all its WSIs. Notably, the tumor regions within the positively classified slides of this dataset are exteremely small and imbalanced, constituting less than 10\% of the area \cite{camelyon16}.

\textbf{TCGA Lung Cancer} or The Cancer Genome Atlas Lung Cancer dataset comprises a selection from the vast TCGA database, specifically contianing two subtypes of lung cancer: Lung Adenocarcinoma (LUAD) and Lung Squamous Cell Carcinoma (LUSC). Following the exclusion of low-quality WSIs, the dataset is constituted of 1042 WSIs (530 LUAD and 512 LUSC). In this dataset, the tumor regions within the WSIs account for more than 80\% of them. A critical fact about this dataset is that most patients have multiple WSIs \cite{tcga}.

\textbf{MIL Datasets} or Multiple Instance Learning datasets, are classical datasets designed for the assessment of MIL frameworks. These include Musk1 and Musk2, which are utilized to ascertain whether a drug molecule will bind to a specific target protein. Each molecule is represented through various conformations, with a molecule deemed positive if at least one conformation binds effectively, although the specific conforming binder remains unidentified. The remaining dataset Elephant consists of 200 bags—100 positive and 100 negative. These bags are composed of instances derived from segmented image embeddings. A bag is classified as positive if it contains at least one instance featuring the relevant elephant; otherwise, it is deemed negative. However, the precise labels for individual instances are unknown \cite{muskmildatasets, animalmildatasets}. 

\section{Additional Experimental Setup}
\label{exp_setup_supp}

The patching process of the CAMELYON16 dataset \cite{camelyon16} yielded approximately 3.5 million patches, averaging around 9,000 patches for each WSI. Patches overlapping with annotated tumor regions were classified as tumor patches, whereas the remainder were categorized as normal. We adhered to the official CAMELYON16 \cite{camelyon16} training and testing split, and \(20\%\) of training WSIs as a validation set were selected randomly. Each model was independently executed five times from scratch, and we report the average for each performance metric.

The patching process of the TCGA Lung Cancer dataset \cite{tcga} led to the generation of around 12.5 million patches, with an average of approximately \(12,000\) patches per WSI. The dataset was divided into training, validation, and test sets, constituting roughly \(60\%\), \(15\%\), and \(25\%\) of the total, respectively, ensuring no patient's multiple WSIs were distributed across different sets. Again five independent executions from scratch, with the mean of each metric reported.

For the MIL datasets \cite{muskmildatasets, animalmildatasets}, we implemented a 10-fold cross-validation procedure, conducting five runs per fold, and reported the mean and standard deviation for each metric.

\section{Implementation Details}
\label{imp_details}

For SimCLR Exhaustive, we used pretrained ResNet-18 encoders from \cite{dsmil}, trained separately on the CAMELYON16 and TCGA Lung Cancer datasets \cite{camelyon16, tcga}.

For DINO Exhaustive, we followed \cite{dino} and trained a ViT-S16 from scratch on a domain-specific dataset containing all patches from each training WSI with the default hyperparameters from their official GitHub repository.

For MAE Adapter, we fine-tuned ImageNet-1K pretrained ViT-B/16 plus Adapter on a domain-specific dataset containing 200 random patches from each training WSI through our few-shot learning approach with base learning rate of 0.001 for 400 epochs with 40 warmup epochs.

For DINO Adapter, we fine-tuned ImageNet-1K pretrained ViT-S/16 plus Adapter on a domain-specific dataset containing 50 random patches from each training WSI through our few-shot learning approach with learning rate of 0.0005 and start weight decay of 0.04 and final weight deacy of 0.4 for 100 epochs.

For Snuffy SimCLR Exhaustive, we employed cross-entropy loss, set the Number of Class-Related Global Attentions to 200, the Number of Random Attentions to 700, the Number of Heads to 2, the Number of Layers to 5, the optimizer to AdamW \cite{adamw}, weight decay to 0.05, betas to 0.5 and 0.9, learning rate to 0.0002, and trained for 200 epochs.

For Snuffy DINO Exhaustive, we employed cross-entropy loss, set the Number of Class-Related Global Attentions to 200, the Number of Random Attentions to 700, the Number of Heads to 4, the optimizer to AdamW \cite{adamw}, weight decay to 0.005, betas to 0.9 and 0.999, learning rate to 0.002, and trained for 200 epochs.

For Snuffy DINO Adapter, we employed cross-entropy loss, set the Number of Class-Related Global Attentions to 250, the Number of Random Attentions to 250, the Number of Heads to 4, the optimizer to AdamW \cite{adamw}, weight decay to 0.05, betas to 0.9 and 0.999, learning rate to 0.02, and trained for 200 epochs.

For Snuffy MAE Adapter, we employed cross-entropy loss, set the Number of Class-Related Global Attentions to 250, the Number of Random Attentions to 250, the Number of Heads to 4, the optimizer to AdamW \cite{adamw}, weight decay to 0.005, betas to 0.9 and 0.999, learning rate to 0.02, and trained for 200 epochs.

All MIL-training procedures were conducted only using bag-level labels, incorporating early stopping based on validation set performance to prevent overfitting. These experiments were facilitated using PyTorch (version 2.1) \cite{pytorch} and scikit-learn, running on a system equipped with an RTX 3090.

\section{Calibration Metric ECE}
\label{calibration}

Let \(B_m\) denote the subset of indices for samples whose prediction confidence lies within the interval \(I_m = \left(\frac{m-1}{M}, \frac{m}{M}\right]\). The accuracy for the bin \(B_m\) is:

\begin{equation}
    \text{acc}(B_m) = \frac{1}{|B_m|} \sum_{i \in B_m} 1(\hat{y}_i = y_i),
\end{equation}

where \(\hat{y}_i\) represents the predicted label, and \(y_i\) denotes the actual label for the \(i^{th}\) sample. The mean confidence level for bin \(B_m\) is given by:

\begin{equation}
    \text{conf}(B_m) = \frac{1}{|B_m|} \sum_{i \in B_m} \hat{p}_i,
\end{equation}

with \(\hat{p}_i\) indicating the confidence associated with the \(i^{th}\) sample.

The Expected Calibration Error (ECE) is empirically determined as:

\begin{equation}
    ECE = \sum_{m=1}^{M} \frac{|B_m|}{n} \left| \text{acc}(B_m) - \text{conf}(B_m) \right|,
\end{equation}

For a perfectly calibrated model (\(\text{acc}(B_m) = \text{conf}(B_m)\) for all \(m \in \{1, \ldots, M\}\), yielding ECE of \(0\) \cite{calibration}.

\section{Additional Ablation}
\label{additional_ablation}

We further provide ablation on the number of Class-related Global Attentions in Fig. \ref{fig:class_related_abl} and \ref{fig:shot_abl} and the number of shots for continual training of the MAE \cite{mae} using Adapter. The latter is done with Snuffy MAE Adapter. They also generally show that bigger numbers give better results with a plateau in the former and no apparent plateau in the latter.

\begin{figure}
    \centering
    \begin{minipage}{0.49\textwidth}
        \centering
        \includegraphics[width=1\textwidth]{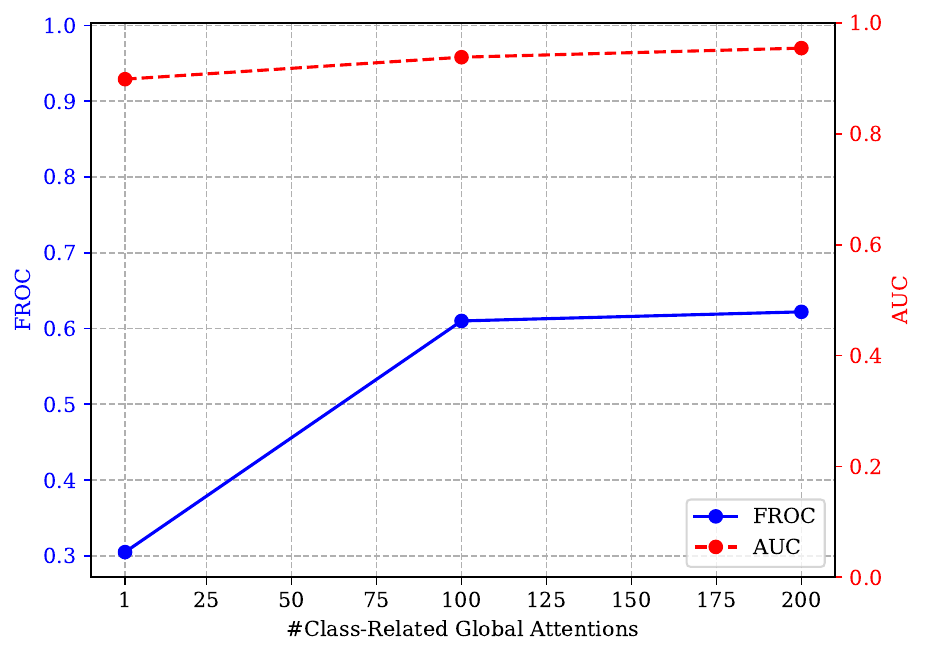} 
        \caption{Ablation on number of Class-Related Global Attentions.}
        \label{fig:class_related_abl}
    \end{minipage}\hfill
    \begin{minipage}{0.49\textwidth}
        \centering
        \includegraphics[width=1\textwidth]{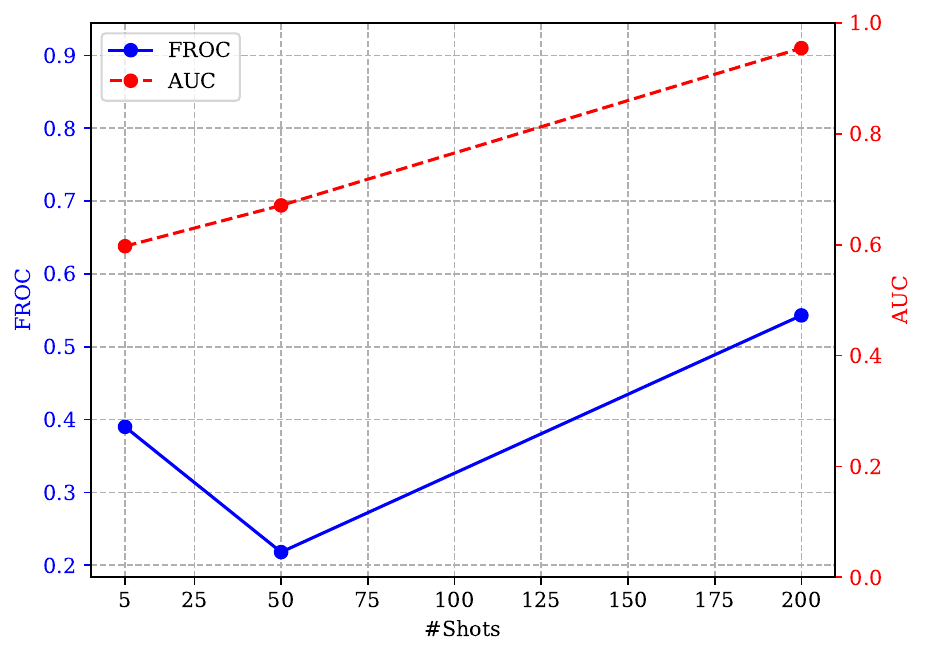} 
        \caption{Ablation on number of shots i.e. number of patches per WSI for MAE \cite{mae} continual training with Adapter.}
        \label{fig:shot_abl}
    \end{minipage}
\end{figure}

\section{Evaluation on Natural Images}
\label{eval_natural}

We further validated our approach by training our model, Snuffy, on classification of natural images to assess its performance. We compared it to the performance of a ViT-S/16 as described \cite{simple_vit}, in MIL setting without positional encoding, which has been shown by \cite{mocov3} to only slightly detract the performance.

For our tests, we configured Snuffy to have a similar structure to ViT-S/16, particularly only except in the self-attention mechanisms. We set \( \lambda_{top} = 2\) and \( \lambda_{r} = 2\), referring to this configuration as Snuffy-S/16. The CIFAR-100 dataset was used as our training data. Both models were trained from scratch for 100 epochs using with image size of \(224\), weight decay of \(0.05\), learning rate of \(0.001\), and the use of the AdamW optimizer \cite{adamw}, with settings optimized for ViT-S/16. Augmentations were applied identically to both models as per established following \cite{simple_vit}.

Results illustrated in Table \ref{vit_vs_snuffy} indicate that Snuffy's approach to MIL pooling is competitive with ViT when it comes to processing a minimal number of patches, underscoring Snuffy's capability as a universal approximator similar to ViT. We suggest that Snuffy could potentially perform even better with optimized hyperparameters. However, it is important to note that the accuracies achieved by both models were not exceptionally high, attributed to the limited data training regimen, which is not ideal for Transformer models.

\begin{table*}[t]
    \setlength{\tabcolsep}{4pt}
    \centering
    \begin{tabular}{*{3}c}
    \\\hline
    Model & ACC@1 & ACC@5 \\\hline
    ViT-S/16 & 0.535 & 0.803 \\
    Snuffy-S/16 & 0.429 & 0.732 \\\hline
    \end{tabular}
    \caption{Performance of ViT-S/16 and Snuffy-S/16 on CIFAR-100 trained from scratch.}
    \label{vit_vs_snuffy}
\end{table*}

\section{t-SNE of MAE Embeddings}
\label{tsne_mae}

We present the t-SNE visualizations for the learned embeddings from ImageNet-1k pre-trained MAE as depicted in Fig. \ref{fig:tsne_mae_a}, continual pre-training with full-tuning in Fig. \ref{fig:tsne_mae_b}, and continual pre-training using the Adapter in Fig. \ref{fig:tsne_mae_c}. From these visualizations, it is evident that ImageNet-1k pre-trained embeddings hold promise. Nonetheless, full-tuning outperforms this baseline, with the Adapter method yielding the most superior results. This underscores the advantage of preserving initial weights while concurrently adjusting to the domain-specific dataset, thereby establishing a more efficacious fine-tuning strategy in Pathology.

\begin{figure*}
  \begin{subfigure}[b]{0.32\textwidth}
    \centering
    \includegraphics[width=\textwidth]{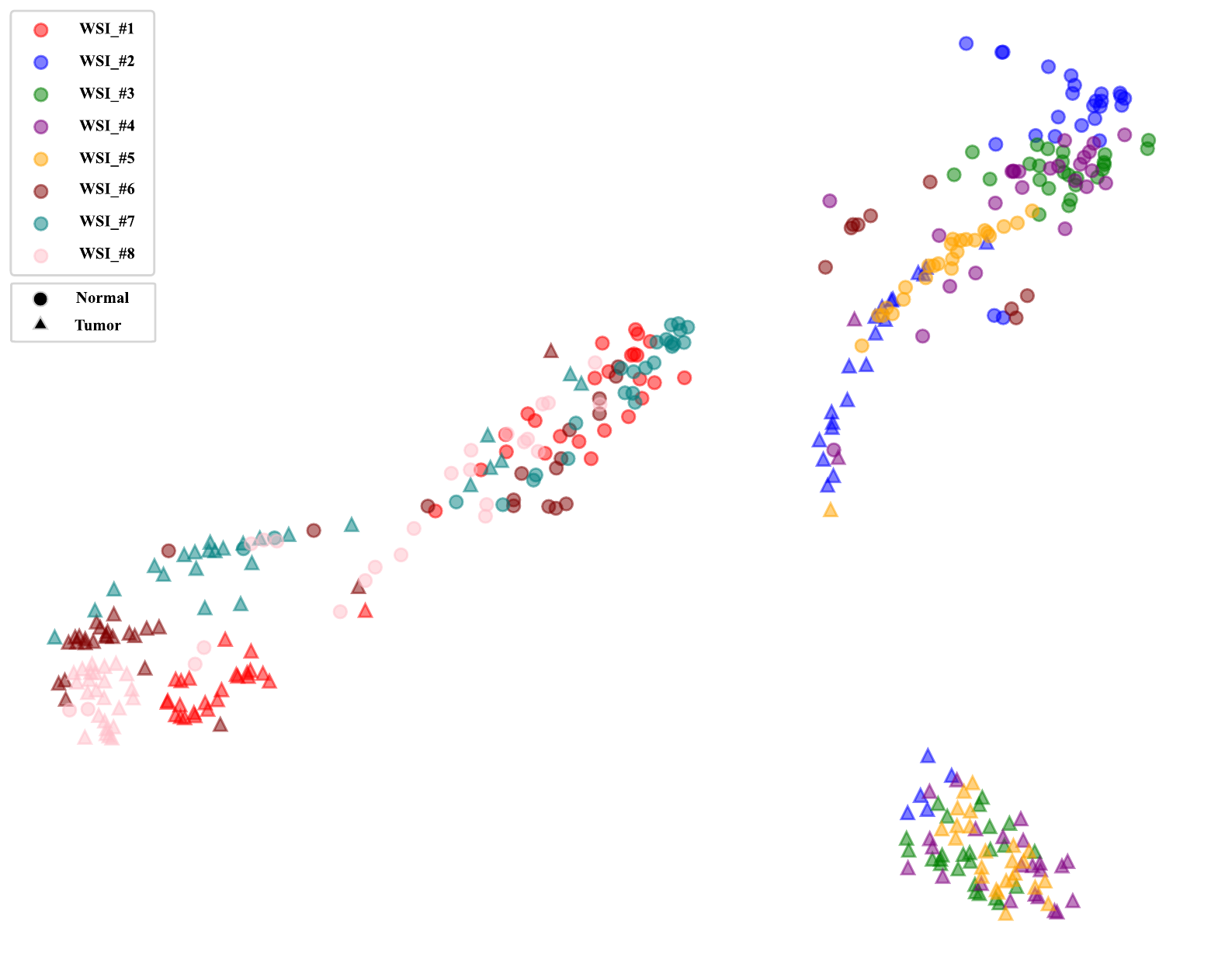}
    \caption{} \label{fig:tsne_mae_a}
  \end{subfigure}
  \begin{subfigure}[b]{0.32\textwidth}
    \centering
    \includegraphics[width=\textwidth]{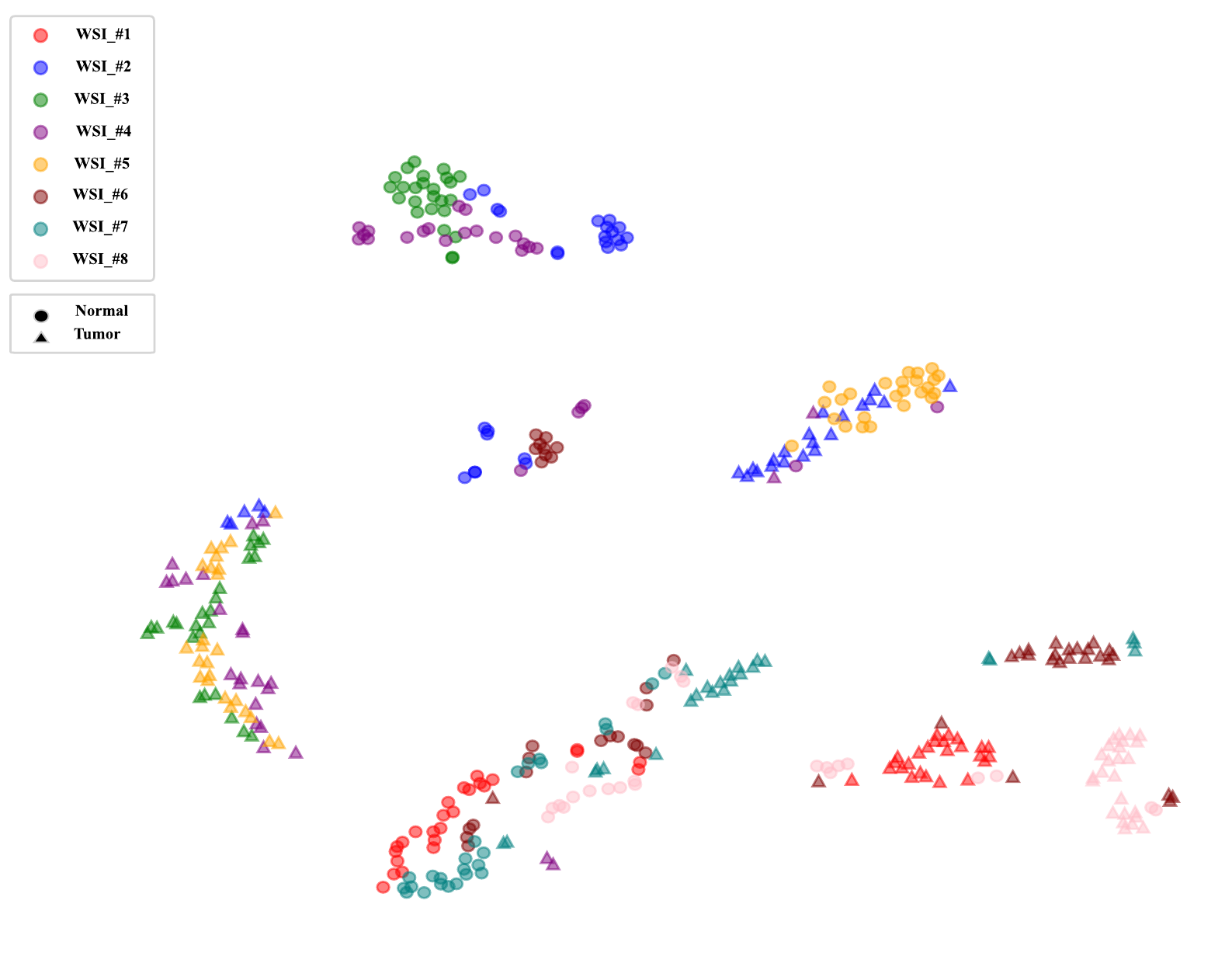}
    \caption{} \label{fig:tsne_mae_b}
  \end{subfigure}
  \begin{subfigure}[b]{0.32\textwidth}
    \centering
    \includegraphics[width=\textwidth]{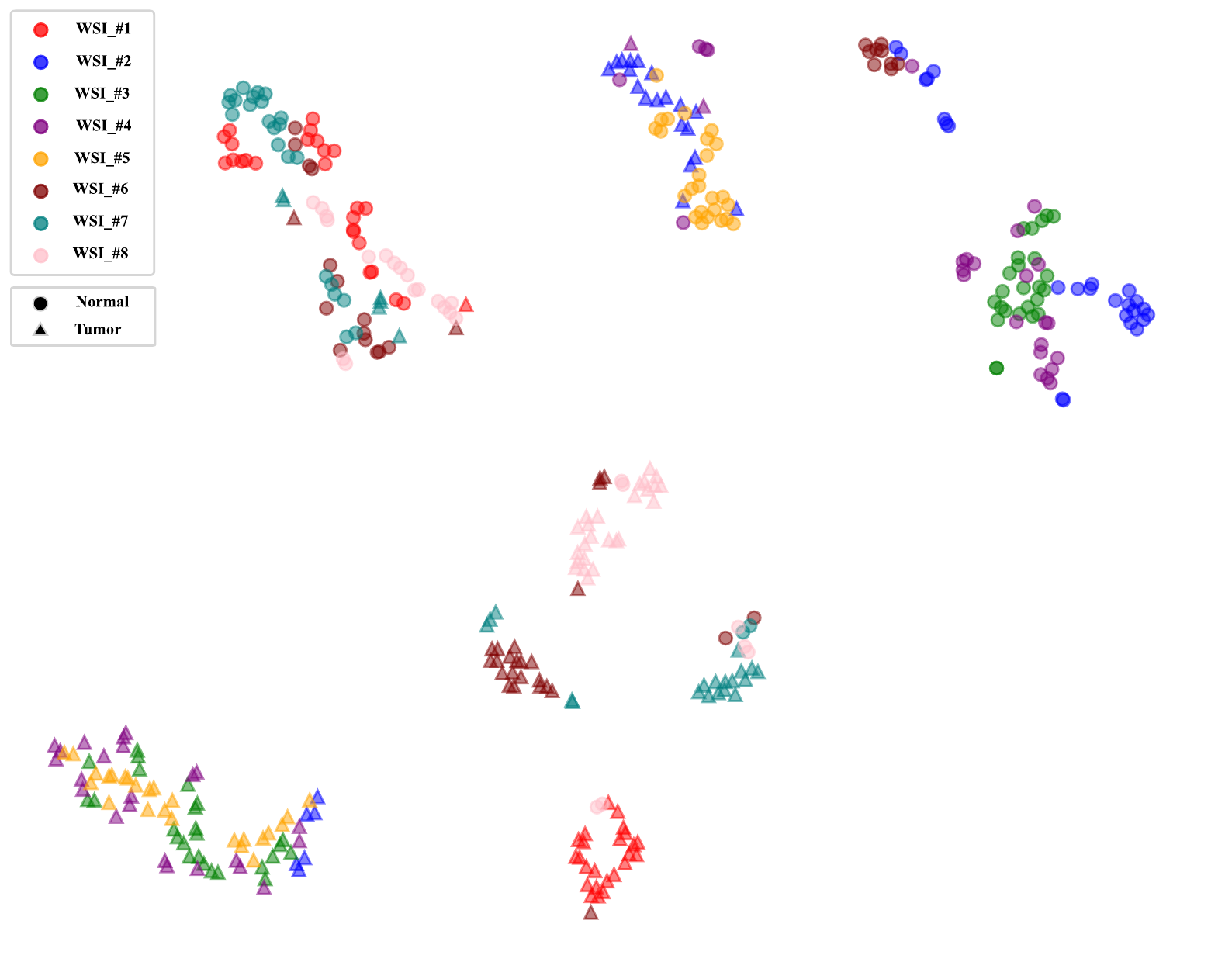}
    \caption{} \label{fig:tsne_mae_c}
  \end{subfigure}
  \caption{t-SNE of MAE learned embeddings. (a) on ImageNet-1K, (b) Full-tuning from ImageNet-1K to CAMAELYON16 \cite{camelyon16}, and (c) Adapter tuning from ImageNet-1K to CAMELYON16 \cite{camelyon16}.}
  \label{fig:IDK}
\end{figure*}

\section{t-SNE of Snuffy Embeddings}
\label{tsne_snuffy}

Here, we provide the t-SNE of learned embeddings of the Attention-pooling in Snuffy for the CAMELYON16 and TCGA Lung Cancer datasets \cite{camelyon16, tcga}. As we can see, the model shows a clear distinction between different classes.

\begin{figure}
    \centering
    \begin{minipage}{0.49\textwidth}
        \centering
        \includegraphics[width=1\textwidth]{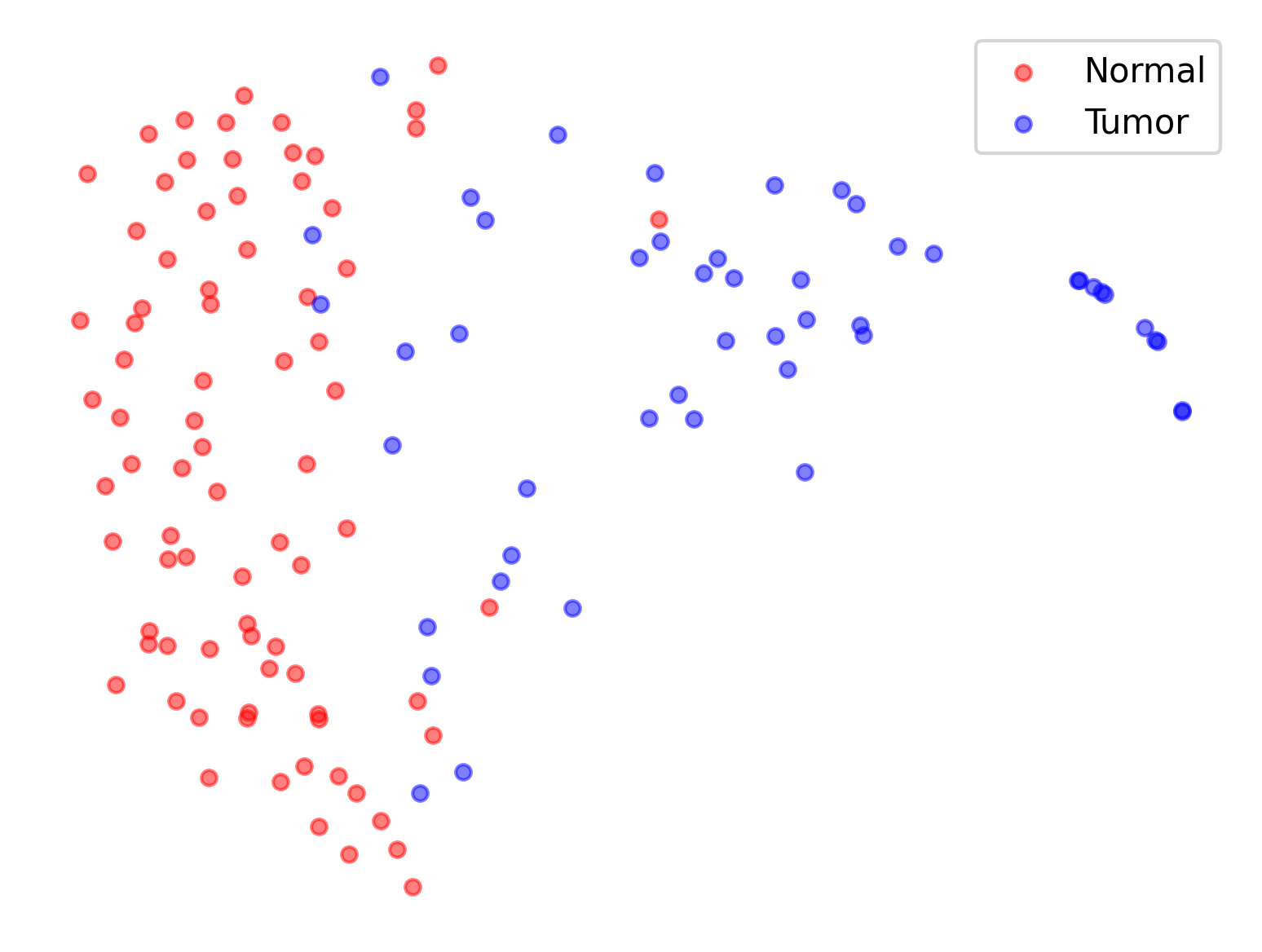} 
        \caption{t-SNE of learned embeddings on the CAMELYON16 \cite{camelyon16} dataset.}
        \label{fig:camtsne}
    \end{minipage}\hfill
    \begin{minipage}{0.49\textwidth}
        \centering
        \includegraphics[width=1\textwidth]{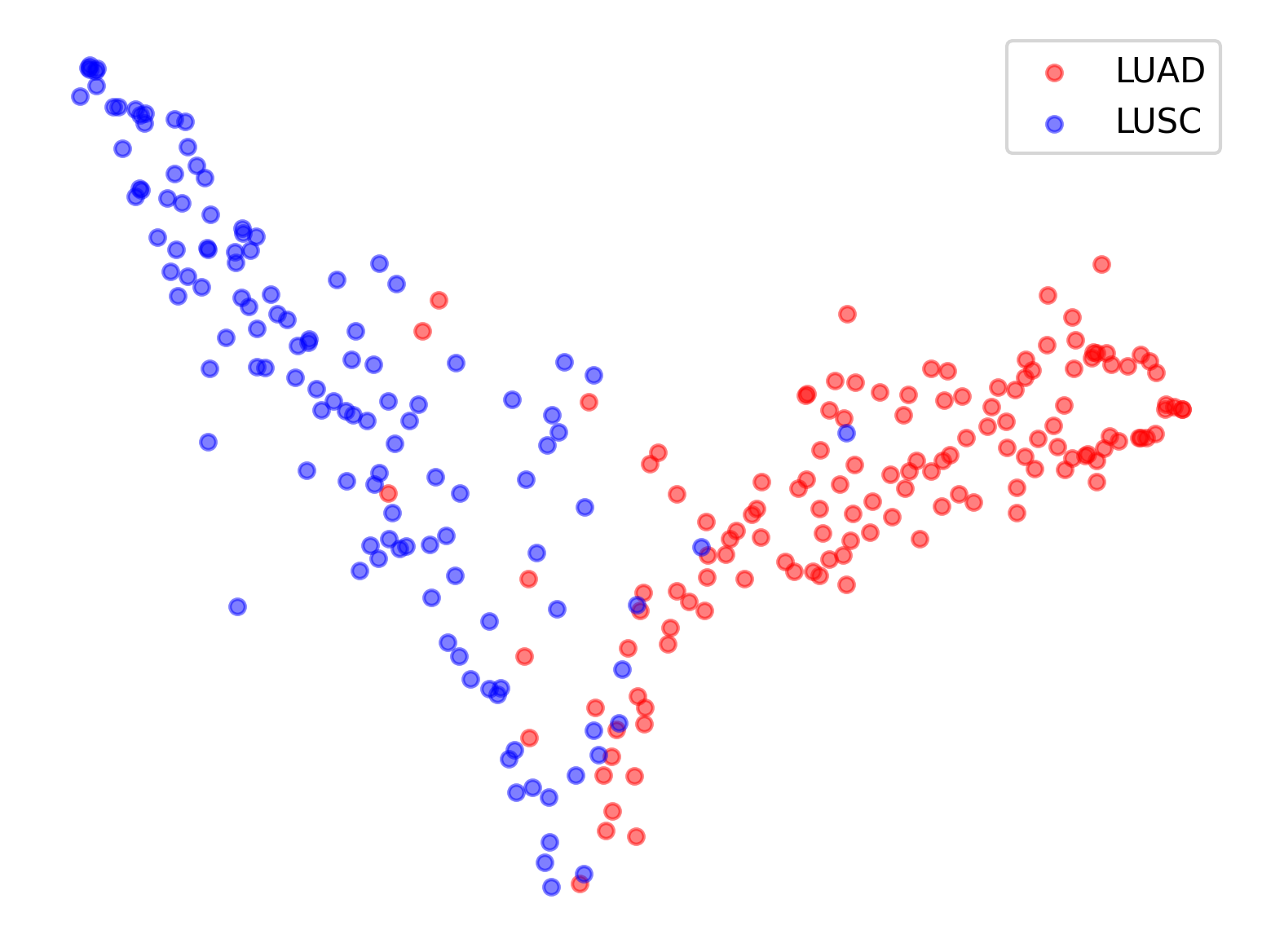} 
        \caption{t-SNE of learned embeddings on the TCGA Lung Cancer dataset \cite{tcga}.}
        \label{fig:tcgatsne}
    \end{minipage}
\end{figure}

\section{Additional ROI Detection Images}
\label{additional_roi}

We provide additional examples of ROI detection on the CAMELYON16 and TCGA Lung Cancer datasets \cite{camelyon16, tcga}. Illustrated in Fig. \ref{fig:additional_roi_camelyon}, our model adeptly identifies cancerous regions, even when such areas comprise small portions of the images. Regarding the TCGA Lung Cancer dataset, depicted in Fig. \ref{fig:additional_roi_tcga}, ground truth for cancerous regions is not available. However, it is known that generally, more than \( 80\% \) of the images in this dataset contain cancer, and our model's performance is consistent with this statistic.

\begin{figure*}
  \centering
  \hspace*{7mm}\begin{subfigure}[b]{0.4\textwidth}
    \centering
    \includegraphics[width=\textwidth]{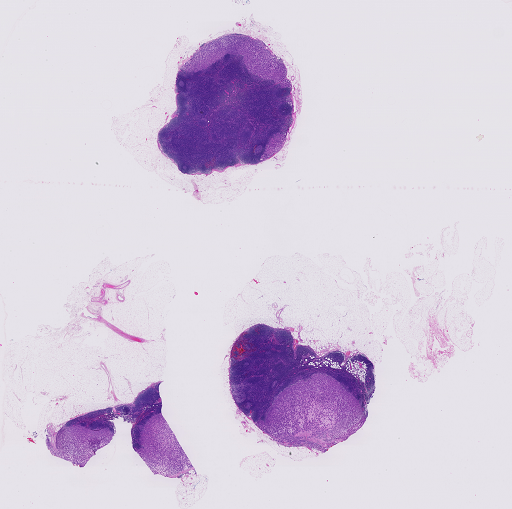}
  \end{subfigure}
  \begin{subfigure}[b]{0.4\textwidth}
    \centering
    \includegraphics[width=\textwidth]{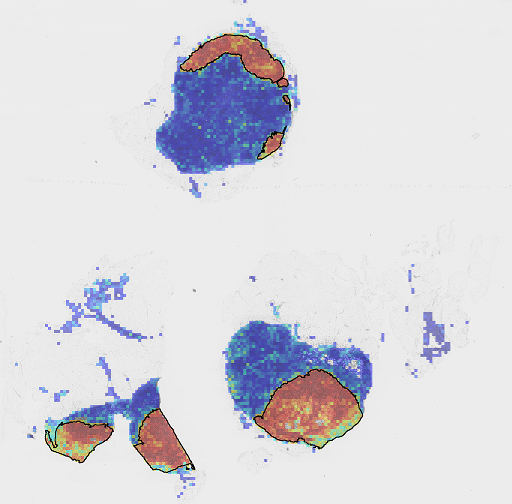}
  \end{subfigure}
  \begin{subfigure}[b]{0.1\textwidth} 
    \raisebox{0.1mm}{
      \includegraphics[width=0.35\textwidth,height=48mm]{figures/heatmap_legend.png}
      }
  \end{subfigure}
  \hspace*{7mm}\begin{subfigure}[b]{0.4\textwidth}
    \centering
    \includegraphics[width=\textwidth]{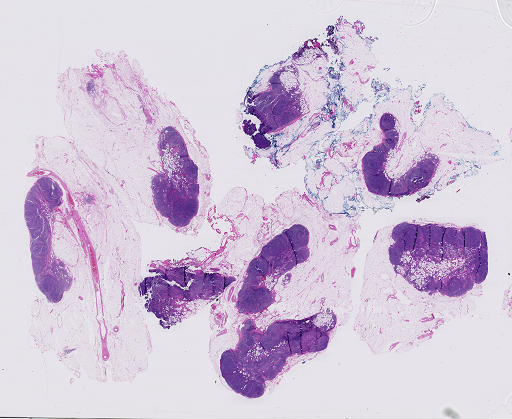}
  \end{subfigure}
  \begin{subfigure}[b]{0.4\textwidth}
    \centering
    \includegraphics[width=\textwidth]{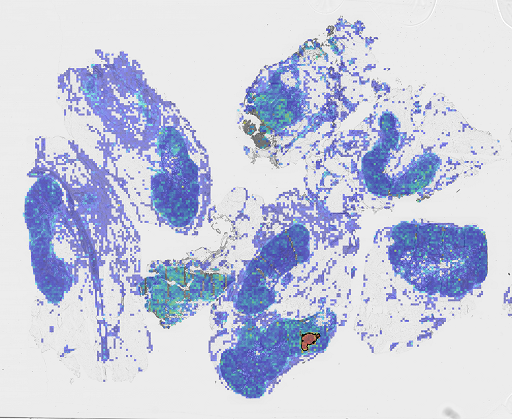}
  \end{subfigure}
  \begin{subfigure}[b]{0.1\textwidth} 
    \raisebox{0.1mm}{
      \includegraphics[width=0.35\textwidth,height=40mm]{figures/heatmap_legend.png}
      }
  \end{subfigure}
  \hspace*{7mm}\begin{subfigure}[b]{0.4\textwidth}
    \centering
    \includegraphics[width=\textwidth]{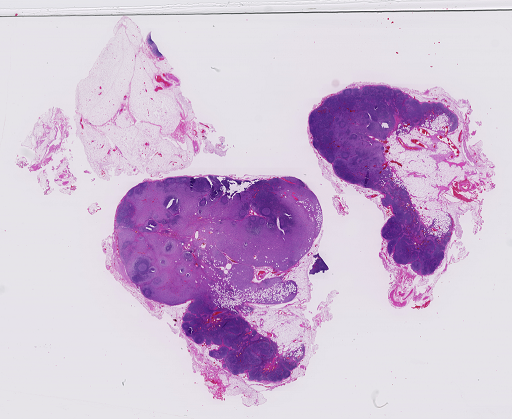}
  \end{subfigure}
  \begin{subfigure}[b]{0.4\textwidth}
    \centering
    \includegraphics[width=\textwidth]{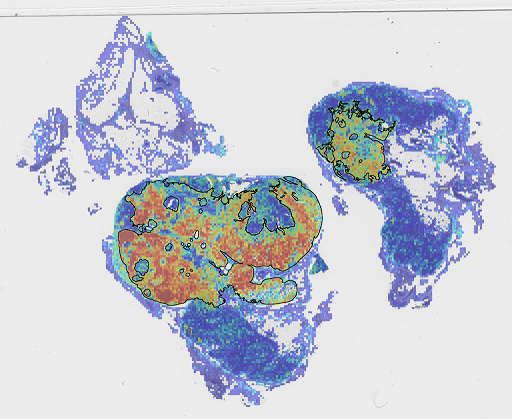}
  \end{subfigure}
  \begin{subfigure}[b]{0.1\textwidth} 
    \raisebox{0.1mm}{
      \includegraphics[width=0.35\textwidth,height=40mm]{figures/heatmap_legend.png}
      }
  \end{subfigure}
  \hspace*{7mm}\begin{subfigure}[b]{0.4\textwidth}
    \centering
    \includegraphics[width=\textwidth]{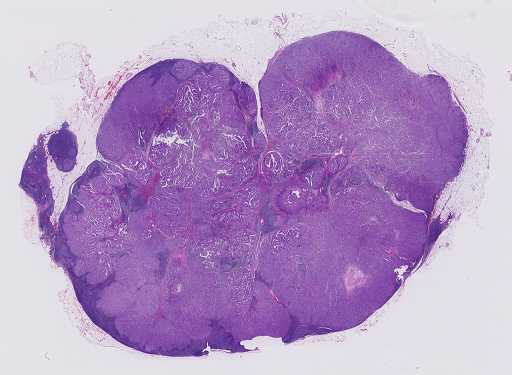}
    \caption{} \label{fig:additional_roi_camelyon_a}
  \end{subfigure}
  \begin{subfigure}[b]{0.4\textwidth}
    \centering
    \includegraphics[width=\textwidth]{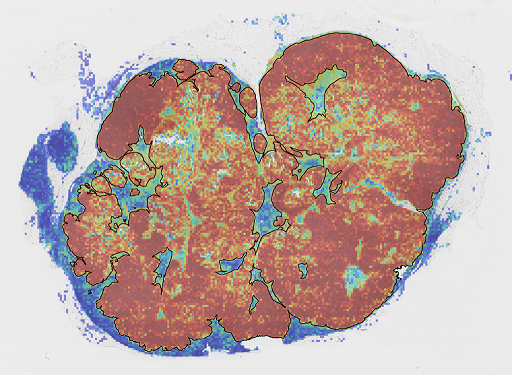}
    \caption{} \label{fig:additional_roi_camelyon_b}
  \end{subfigure}
  \begin{subfigure}[b]{0.1\textwidth} 
    \raisebox{3.9mm}{
      \includegraphics[width=0.35\textwidth,height=36mm]{figures/heatmap_legend.png}
      }
  \end{subfigure}
  \caption{Additional Images for qualitative assessment of ROI detection in Snuffy on the CAMELYON16 \cite{camelyon16} dataset. All Images are labaled as Tumor and the black line shows the ground truth.}
  \label{fig:additional_roi_camelyon}
\end{figure*}
\begin{figure*}[!htbp]
  \centering 
  \hspace*{7mm}\begin{subfigure}[b]{0.39\textwidth}
    \centering
    \includegraphics[width=\textwidth]{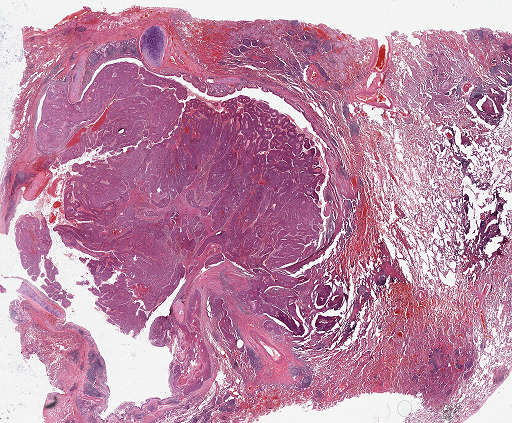}
  \end{subfigure}
  \begin{subfigure}[b]{0.39\textwidth}
    \centering
    \includegraphics[width=\textwidth]{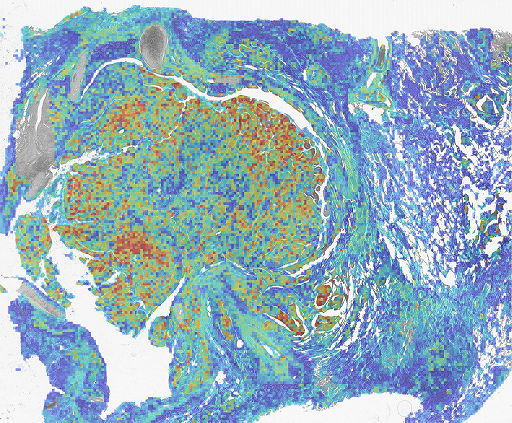}
  \end{subfigure}
  \begin{subfigure}[b]{0.1\textwidth} 
    \raisebox{0.1mm}{
      \includegraphics[width=0.35\textwidth,height=39mm]{figures/heatmap_legend.png}
      }
  \end{subfigure}
  \hspace*{7mm}\begin{subfigure}[b]{0.39\textwidth}
    \centering
    \includegraphics[width=\textwidth]{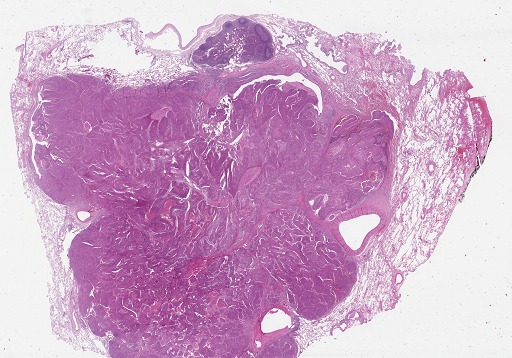}
  \end{subfigure}
  \begin{subfigure}[b]{0.39\textwidth}
    \centering
    \includegraphics[width=\textwidth]{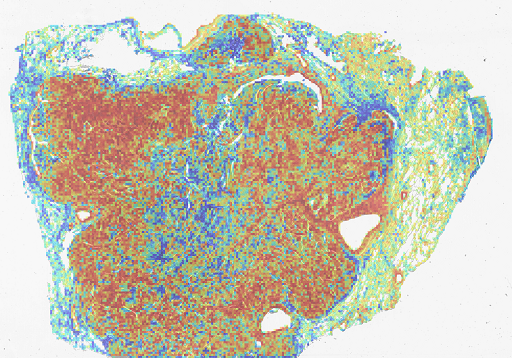}
  \end{subfigure}
  \begin{subfigure}[b]{0.1\textwidth} 
    \raisebox{0.1mm}{
      \includegraphics[width=0.35\textwidth,height=33mm]{figures/heatmap_legend.png}
      }
  \end{subfigure}
  \hspace*{7mm}\begin{subfigure}[b]{0.39\textwidth}
    \centering
    \includegraphics[width=\textwidth]{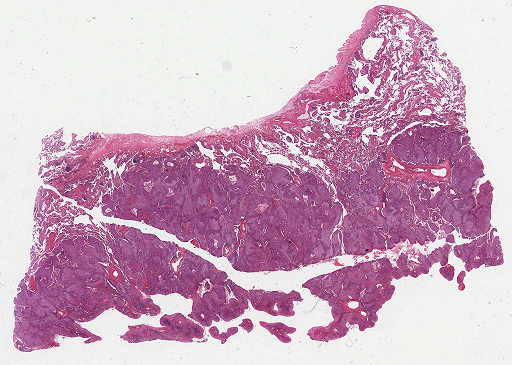}
  \end{subfigure}
  \begin{subfigure}[b]{0.39\textwidth}
    \centering
    \includegraphics[width=\textwidth]{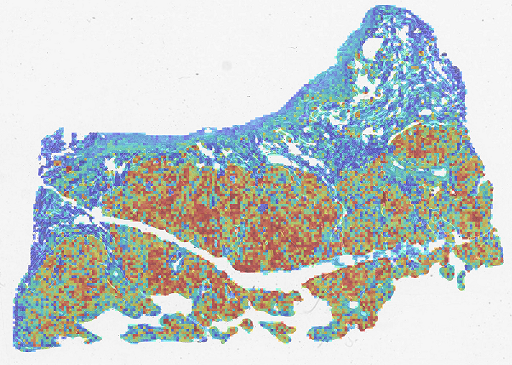}
  \end{subfigure}
  \begin{subfigure}[b]{0.1\textwidth} 
    \raisebox{0.1mm}{
      \includegraphics[width=0.35\textwidth,height=34.2mm]{figures/heatmap_legend.png}
      }
  \end{subfigure}
  \hspace*{7mm}\begin{subfigure}[b]{0.39\textwidth}
    \centering
    \includegraphics[width=\textwidth]{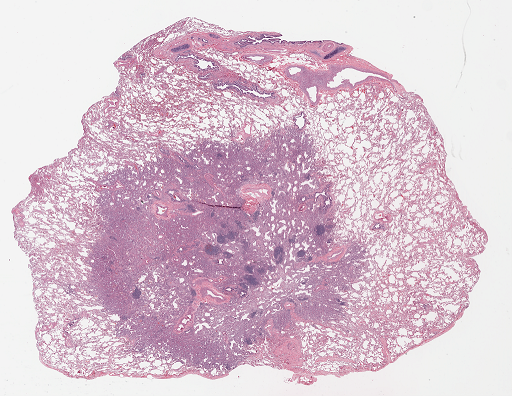}
  \end{subfigure}
  \begin{subfigure}[b]{0.39\textwidth}
    \centering
    \includegraphics[width=\textwidth]{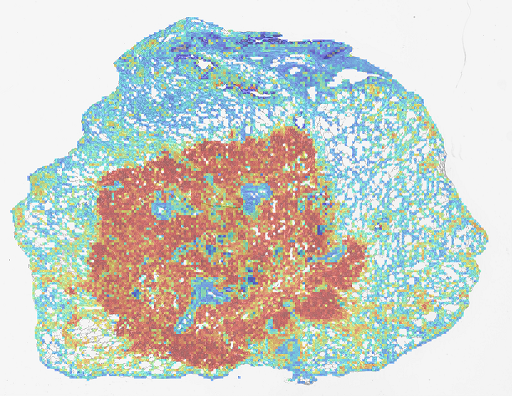}
  \end{subfigure}
  \begin{subfigure}[b]{0.1\textwidth} 
    \raisebox{0.1mm}{
      \includegraphics[width=0.35\textwidth,height=36.5mm]{figures/heatmap_legend.png}
      }
  \end{subfigure}
  \hspace*{7mm}\begin{subfigure}[b]{0.39\textwidth}
    \centering
    \includegraphics[width=\textwidth]{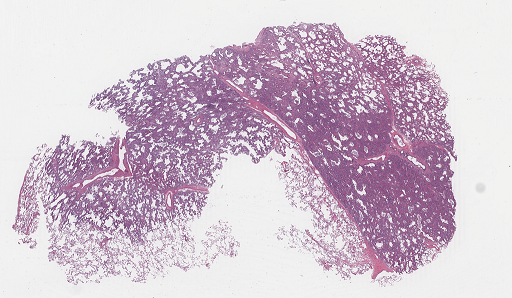}
    \caption{} \label{fig:additional_roi_tcga_a}
  \end{subfigure}
  \begin{subfigure}[b]{0.39\textwidth}
    \centering
    \includegraphics[width=\textwidth]{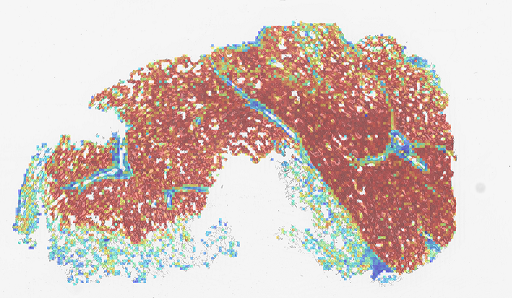}
    \caption{} \label{fig:additional_roi_tcga_b}
  \end{subfigure}
  \begin{subfigure}[b]{0.1\textwidth} 
    \raisebox{3.9mm}{
      \includegraphics[width=0.35\textwidth,height=27.5mm]{figures/heatmap_legend.png}
      }
  \end{subfigure}
  \caption{Additional Images for qualitative assessment of ROI detection in Snuffy on the TCGA Lung Cancer dataset \cite{tcga}. The top three rows are labeled as LUSC, and the bottom two are labeled as LUAD.}
  \label{fig:additional_roi_tcga}
\end{figure*}

\section{Analysis of Layer Counts for Universal Approximation}
An examination of the relationship between the number of layers per $\lambda_r$ and the quantity of patches $n$ is presented (see Fig.~\ref{fig:abl_layers}). It is evident that our probability constraint on number of layers significantly surpasses the current limitation on layer count imposed by \cite{bigbird} for ensuring universal approximation.

\begin{figure}[!htbp]
        \centering
\begin{subfigure}{\textwidth}
    \centering
    \includegraphics[width=0.745\textwidth]{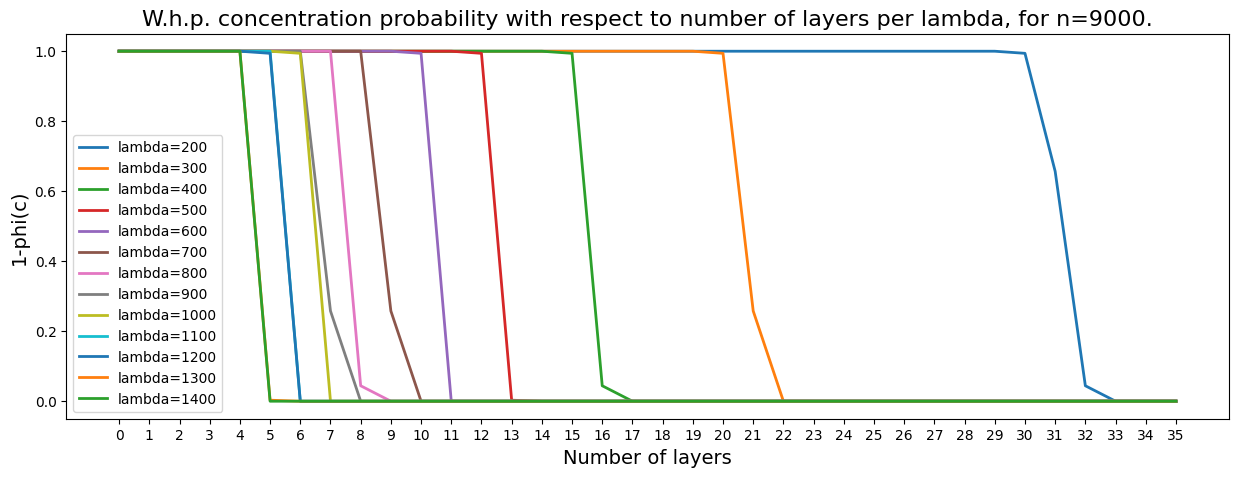}
    \caption{Comparison of convergence of w.h.p in layers per different $\lambda_r$ for number of patches $n=9000$.}
    \label{fig:first}
\end{subfigure}
\hfill
\hfill
\begin{subfigure}{0.7\textwidth}
    \includegraphics[width=0.745\textwidth]{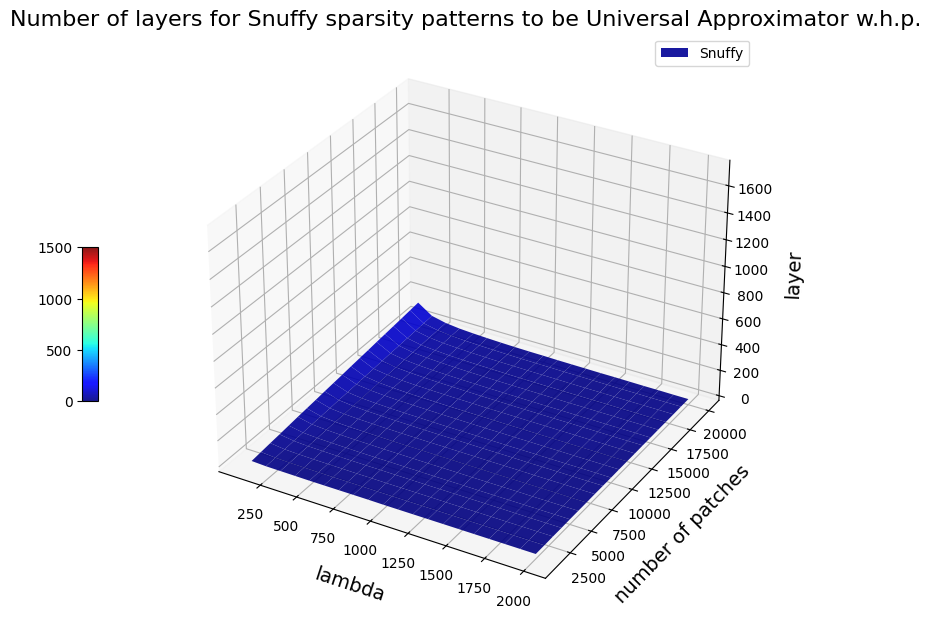}
    \caption{Minimum number of layer for Snuffy sparsity pattern to be universal approximator w.h.p.}
    \label{fig:second}
\end{subfigure}
\hfill
\hfill
\begin{subfigure}{0.7\textwidth}
    \includegraphics[width=0.745\textwidth]{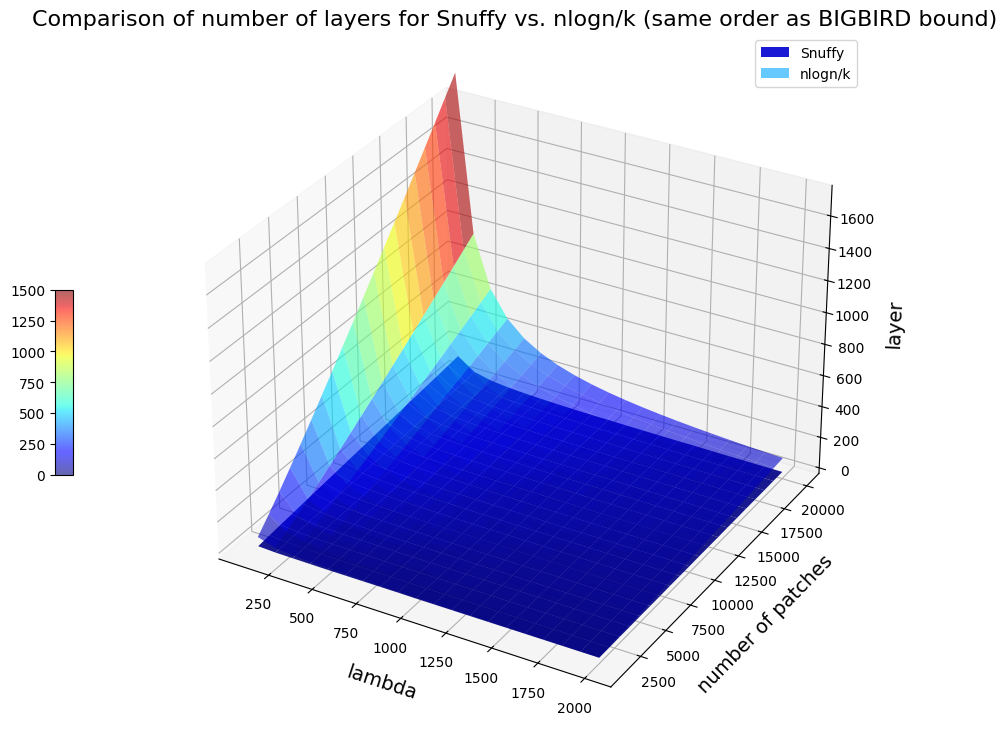}
    \caption{Comparison of Snuffy bound on number of layers to the $\frac{n\log n}{k}$}
    \label{fig:third}
\end{subfigure}
        
\caption{Analysis on number of layers per $\lambda_r$ and number of patches $n$. \textbf{(a)} For a fixed $n=9000$ convergence of $1-\Phi(c)$ per different $\lambda_r$ is depicted. \textbf{(b)} 3D illustration on minimum number of layers needed for Snuffy to be universal approximator. \textbf{(c)} Comparison of the same plot (the lower surface) with surface $\frac{n\log n}{k}$ (the upper surface) as suggested by previous literature \cite{bigbird}. }
\label{fig:abl_layers}
\end{figure}



\end{document}